\newif\iftoonzichtbaar
\newif\iftoonverborgen
\newtheoremstyle{theorem}{7pt}{3pt}{\itshape}{}{\bfseries}{}{\newline}{}
\theoremstyle{theorem}
\newtheorem{theo}{Theorem}
\newtheorem{proposition}[theo]{Proposition}
\newtheorem{lemma}[theo]{Lemma}
\newtheoremstyle{definition}{7pt}{5pt}{\itshape}{}{\bfseries}{}{\newline}{}
\theoremstyle{definition}
\newtheorem{definition}{Definition}
\newtheoremstyle{example}{6pt}{3pt}{}{}{\bfseries}{}{\newline}{}
\theoremstyle{example}
\newtheorem{example}{Example}
\newtheoremstyle{note}{5pt}{3pt}{}{}{\bfseries}{}{\newline}{}
\theoremstyle{note}
\newcommand{\la}{\leftarrow}
\newcommand{\Ra}{\Rightarrow}
\newcommand{\PP}{\mathcal{P}}
\newcommand{\es}{\emptyset}
\newcommand{\dpt}{\hspace{-2pt}:\hspace{-2pt}}
\title{Modeling Stable Matching Problems with Answer Set Programming\thanks{This research was funded by a Research Foundation-Flanders project.}} 
\author[a]{Sofie De Clercq}
\author[b]{Steven Schockaert}
\author[a]{Martine De Cock}
\author[c]{Ann Now\'e}
\affil[a]{{\scriptsize Department of Applied Mathematics, Computer Science \& Statistics, Ghent University, Krijgslaan 281, 9000 Ghent, \underline{Belgium} \newline \textbf{\{Sofier.DeClercq,Martine.DeCock\}@ugent.be}}}
\affil[b]{{\scriptsize School of Computer Science \& Informatics, Cardiff University, 5 The Parade, Roath, Cardiff CF24 3AA, \underline{United Kingdom} \newline \textbf{S.Schockaert@cs.cardiff.ac.uk}}}
\affil[c]{{\scriptsize Computational Modeling Lab, Vrije Universiteit Brussel, Pleinlaan 2, 1050 Brussel, \underline{Belgium} \newline \textbf{ANowe@vub.ac.be}}}
\date{}
\begin{document}

\pagestyle{empty}

\mbox{ }
\vspace{4cm}
\begin{center}
{\Huge Modeling Stable Matching Problems}
\end{center}
\begin{center}
{\Huge with Answer Set Programming}
\end{center}
\vspace{1cm}
\begin{center}
{\Large Sofie De Clercq\textsuperscript{a}, Steven Schockaert\textsuperscript{b}, Martine De Cock\textsuperscript{a} \& Ann Now\'e\textsuperscript{c}}
\end{center}
\begin{center}
\textsuperscript{a} {\small Department of Applied Mathematics, Computer Science \& Statistics, Ghent University, Krijgslaan 281, 9000 Ghent, \underline{Belgium}  \textbf{\{Sofier.DeClercq,Martine.DeCock\}@ugent.be}}\\
\textsuperscript{b} {\small School of Computer Science \& Informatics, Cardiff University, 5 The Parade, Roath, Cardiff CF24 3AA, \underline{United Kingdom} \textbf{S.Schockaert@cs.cardiff.ac.uk}}\\
\textsuperscript{c} {\small Computational Modeling Lab, Vrije Universiteit Brussel, Pleinlaan 2, 1050 Brussel, \underline{Belgium}  \textbf{ANowe@vub.ac.be}}
\end{center}
\vspace{3cm}
\begin{center}
{\LARGE \textbf{The final publication is available at}}\\
\vspace{0.5cm}
{\LARGE \textbf {link.springer.com}}
\end{center}
\vspace{0.2cm}

\newpage
\maketitle

\begin{abstract}
The Stable Marriage Problem (SMP) is a well-known matching problem first introduced and solved by Gale and Shapley~\cite{SMPGale62}. Several variants and extensions to this problem have since been investigated to cover a wider set of applications. Each time a new variant is considered, however, a new algorithm needs to be developed and implemented. As an alternative, in this paper we propose an encoding of the SMP using Answer Set Programming (ASP). Our encoding can easily be extended and adapted to the needs of specific applications. As an illustration we show how stable matchings can be found when individuals may designate unacceptable partners and ties between preferences are allowed. Subsequently, we show how our ASP based encoding naturally allows us to select specific stable matchings which are optimal according to a given criterion. Each time, we can rely on generic and efficient off-the-shelf answer set solvers to find (optimal) stable matchings.
\end{abstract}
{\small \textbf{Keywords:} \,Answer Set Programming, Logic Rules, Stable Marriage Problem, Optimal Stable Matchings.}

\abovedisplayskip=4pt
\belowdisplayskip=4pt
\section{Introduction}
The Stable Marriage Problem (SMP) is a matching problem first introduced and solved by Gale and Shapley~\cite{SMPGale62}. Starting from (i) a set of $n$ men and $n$ women, (ii) for each man a ranking of the women as preferred partners, and (iii) for each woman a ranking of the men as preferred partners, the SMP searches for a set of $n$ couples (marriages) such that there are no man and woman who are in different marriages but both prefer each other to their actual partners. Such a man and woman are called a \textit{blocking pair} and a matching without blocking pairs forms a \textit{stable set} of marriages.
Due to its practical relevance, countless variants on the SMP have been investigated, making the problem assumptions more applicable to a wider range of applications, such as kidney-exchange~\cite{SMPIrv07} and the hospital-resident problem~\cite{SMPMan02}. Recently Roth and Shapley won the Nobel Prize for Economics for their theory of stable allocations and the practice of market design, work that has directly resulted from an application of the SMP.

In the literature, typically each time a new variant on the SMP is considered, a new algorithm is developed (see e.g. \cite{SMPGus87,SMPIrv87,SMPMc12}). In this paper, we propose to use Answer Set Programming (ASP) as a general vehicle for modeling a large class of extensions and variations of the SMP. We show how an ASP encoding allows us to express in a natural way ties in the preferences of men and women, as well as unacceptability constraints (where certain people prefer to remain single over being coupled to undesirable partners). Furthermore, we illustrate how we can use our ASP encoding to find stable matchings that are optimal according to a certain criterion. Although the SMP has been widely investigated, and efficient approximation or exact algorithms are available for several of its variants (see e.g. \cite{SMPMc12}), to the best of our knowledge, our encoding offers the first exact implementation to find sex-equal, minimum regret, egalitarian or maximum cardinality stable sets for SMP instances with unacceptability and ties. 

The paper is structured as follows. In Section \ref{sect:background} we give some background about the SMP and ASP. We introduce our encoding of the SMP with ASP and prove its correctness in the third section. In Section \ref{sec:oss}, we extend our encoding enabling it to find optimal stable sets. We explore several notions of optimality for stable matchings and show how optimal stable matchings can be found by solving the corresponding disjunctive ASP program. Finally we draw our conclusions. 

\section{Background}\label{sect:background}
\subsection{The Stable Marriage Problem}
To solve the standard SMP, Gale and Shapley~\cite{SMPGale62} constructed an iterative algorithm ---known as the Gale-Shapley algorithm, G-S algorithm or deferred-acceptance algorithm--- to compute a particular solution of an SMP instance. The algorithm works as follows: in round 1 every man proposes to his first choice of all women. A woman, when being proposed, then rejects all men but her first choice among the subset of men who proposed to her. That first choice becomes her temporary husband. In the next rounds, all rejected men propose to their first choice of the subset of women by whom they were not rejected yet, regardless of whether this woman already has a temporary husband. Each woman, when being proposed, then rejects all men but her first choice among the subset of men who just proposed to her and her temporary mate. This process continues until all women have a husband. This point, when everyone has a partner, is always reached after a polynomial number of steps and the corresponding set of marriages is stable~\cite{SMPGale62}. It should be noted, however, that only one of the potentially exponentially many stable matchings is found in this way. We formally define the SMP and introduce two variants that will be considered in this paper. We denote a set of men as $M=\{m_1,\hdots,m_n\}$ and a set of women $W=\{w_1,\hdots,w_p\}$, with $n=p$ for the classical SMP. A set of marriages is a set of man-woman pairs such that each man and each woman occurs in just one pair.
\begin{definition}[Classical SMP] \label{def:smp}
An instance of the classical SMP is a pair $(S_M,S_W)$, with $S_M = \{\sigma_M^1,\hdots,\sigma_M^n\}$ and $S_W = \{\sigma_W^1,\hdots,\sigma_W^n\}$ sets of permutations of the integers $1,\hdots,n$. The permutations $\sigma_M^i$ and $\sigma_W^i$ are the preferences of man $m_i$ and woman $w_i$ respectively. If $\sigma_M^i(j)=k$, we say that woman $w_k$ is the $j^{th}$ most preferred woman for man $m_i$, and similarly for $\sigma_W^i(j)=k$. Man $m$ and woman $w$ form a blocking pair in a set of marriages $S$ if $m$ prefers $w$ to his partner in $S$ and $w$ prefers $m$ to her partner in $S$. A solution of an instance is a stable set of marriages, i.e.\ a set of marriages without blocking pairs.
\end{definition}

A first variant of the classical SMP allows men and women to point out unacceptable partners by not including them in their preference list. The number of men $n$ can differ from the number of women $p$ since men and women can remain single. A set of marriages is a set of singles (i.e.\ persons paired to themselves) and man-woman pairs such that every man and woman occurs in just one pair. 
\begin{definition}[SMP with unacceptability] \label{def:smpunacc}
An instance of the SMP with unacceptability is a pair $(S_M,S_W)$, $S_M = \{\sigma_M^1,\hdots,\sigma_M^n\}$, and $S_W = \{\sigma_W^1$, $\hdots$, $\sigma_W^p\}$, with each $\sigma_M^i$ a permutation of a subset of $\{1,\hdots,p\}$ and each $\sigma_W^j$ a permutation of a subset of $\{1,\hdots,n\}$. If $\sigma_M^i(j)=k$, woman $w_k$ is the $j^{th}$ most preferred woman for man $m_i$, and similarly for $\sigma_W^i(j)=k$. If there is no $l$ such that $\sigma_M^i(l)=j$, woman $w_j$ is an unacceptable partner for man $m_i$, and similarly for no $l$ such that $\sigma_W^i(l)=j$. A person $x$ forms a blocking individual in a set of marriages $S$ if $x$ prefers being single to being paired with his or her partner in $S$. A solution of an instance is a stable set of marriages, i.e.\ a set of marriages without blocking pairs or individuals.
\end{definition}
The length of the permutation $\sigma_M^i$ is denoted as $|\sigma_M^i|$. A stable matching for an SMP instance with unacceptability always exists and can be found in polynomial time~\cite{SMPRoth90} by a slightly modified G-S algorithm.
\begin{example} \label{ex:unacc}
Suppose $M=\{m_1,m_2,m_3\}$, $W=\{w_1,w_2,w_3,w_4\}$, $S_M = \{ \sigma_M^1= (4,1,3), \sigma_M^2 = (3,2), \sigma_M^3 = (1,3)\}$ and $S_W = \{ \sigma_W^1 = (1,3), \sigma_W^2 = (2), \sigma_W^3 = (3,2), \sigma_W^4 = (2,1)\}$. Hence woman $w_1$ prefers man $m_1$ to man $m_3$ while man $m_2$ is unacceptable. In this setting, there is exactly one stable set of marriages~\cite{SMPRoth90}: $\{(m_1,w_4),(m_2,w_3),(m_3,w_1),(w_2,w_2)\}$. Thus woman $w_2$ stays single.
\end{example}

The second variant of the SMP allows unacceptability and ties, i.e.\ the preferences do not have to be strict. For this variant there are several ways to define stability, but we will use the notion of weak stability~\cite{SMPIrv94}.
\begin{definition}[SMP with unacceptability and ties] \label{def:smpindif}
An instance of the SMP with unacceptability and ties is a pair $(S_M,S_W)$, $S_M = \{\sigma_M^1,\hdots,\sigma_M^n\}$ and $S_W = \{\sigma_W^1,\hdots,\sigma_W^p\}$. For every $i\in\{1,\hdots,n\}$, $\sigma_M^i$ is a list of disjoint subsets of $\{1,\hdots,p\}$. Symmetrically $\sigma_W^i$ is a list of disjoint subsets of $\{1,\hdots,n\}$ for every $i\in\{1,\hdots,p\}$. We call $\sigma_M^i$ and $\sigma_W^i$ the preferences of man $m_i$ and woman $w_i$ respectively. If $k \in \sigma_M^i(j)$, woman $w_k$ is in man $m_i$'s $j^{th}$ most preferred group of women. All the women in that group are equally preferred by $m_i$. The case $k \in \sigma_W^i(j)$ is similar. If there is no $l$ such that $j \in \sigma_M^i(l)$, woman $w_j$ is an unacceptable partner for man $m_i$, and similar for no $l$ such that $j \in \sigma_W^i(l)$. For every $k$ in the set\footnote[1]{$|\sigma_M^i|$ denotes the length of the list $\sigma_M^i$.} $\sigma_M^i(|\sigma_M^i|)$, man $m_i$ equally prefers staying single to being paired to woman $w_k$, and symmetrically for the preferences of a woman $w_i$. This is the only set in $\sigma_M^i$ that might be empty, and similar for $\sigma_W^i$. 
Man $m$ and woman $w$ form a blocking pair in a set of marriages $S$ if $m$ strictly prefers $w$ to his partner in $S$ and $w$ strictly prefers $m$ to her partner in $S$. A blocking individual in $S$ is a person who stricly prefers being single to being paired to his partner in $S$.
A solution of an instance is a weakly stable set of marriages, i.e.\ a set of marriages without blocking pairs or individuals. 
\end{definition}
A weakly stable matching always exists for an instance of the SMP with unacceptability and ties and it can be found in polynomial time by arbitrarily breaking the ties~\cite{SMPIwa08}. However, as opposed to the previous variant, the number of matched persons is no longer constant for every stable set in this variant. Note that the setting of Definition \ref{def:smpindif} generalizes the setting of Definition \ref{def:smpunacc}, which generalizes the setting of Definition \ref{def:smp}. 
We introduce the notations
\[
acceptable_M^i = \underbrace{\sigma^i_M(1) \cup \sigma^i_M(2) \cup \ldots \cup \sigma^i_M(|\sigma^i_M|-1)}_{\displaystyle = \mbox{\textit{preferred}}_M^i} \cup \underbrace{\sigma^i_M(|\sigma^i_M|)}_{\displaystyle = neutral_M^i}
\]
Furthermore $unacceptable_M^i = \{1,\hdots,p\} \setminus acceptable_M^i$. We define the ordening $\leq_M^{m_i}$ on $\{w_j\,|\, j \in acceptable_M^i\} \cup \{m_i\}$ as $x \leq_M^{m_i} y$ iff $m_i$ prefers person $x$ at least as much as person $y$. The strict ordening $<_M^{m_i}$ is defined in the obvious way and analogous notations are used for $\sigma_W^j$.
\begin{example} \label{ex:indif}
Suppose $M=\{m_1$, $m_2\}$, $W=\{w_1$, $w_2$, $w_3$, $w_4\}$ and $S_M = \{ \sigma_M^1$ = $(\{1,3\},\{4\})$, $\sigma_M^2$ = $(\{2,3\},\{\})\}$. Hence man $m_1$ prefers women $w_1$ and $w_3$ to woman $w_4$. There is a tie between woman $w_1$ and $w_3$ as well as between woman $w_4$ and staying single. Woman $w_2$ is unacceptable for man $m_1$. Man $m_2$ prefers woman $w_2$ and $w_3$ to staying single, but finds $w_1$ and $w_4$ unacceptable. It holds that $w_1 <_M^{m_1} m_1$, i.e.~$m_1$ prefers marrying $w_1$ over staying single, \textit{acceptable}$_M^1=\{1,3,4\}$, \textit{preferred}$_M^1=\{1,3\}$, $neutral_M^1= \{4\}$ and $unacceptable_M^1=\{2\}$.
\end{example}

\subsection{Answer Set Programming}
Answer Set Programming or ASP is a form of declarative programming~\cite{ASPBrew11}. Its transparence, elegance and ability to deal with $\Sigma_2^P$-complete problems make it an attractive method for solving combinatorial search and optimization problems. An ASP program is a finite collection of first-order rules
\begin{align*}
A_1 \vee \hdots \vee A_k \la B_1,\hdots,B_m,not \,C_1,\hdots, not \, C_n
\end{align*}
with $A_1, \hdots, A_k, B_1,\hdots,B_m,C_1,\hdots,C_n$ predicates.
The semantics are defined by the \textit{ground version} of the program, consisting of all ground instantiations of the rules w.r.t.\ the constants that appear in it (see e.g. \cite{ASPBrew11} for a good overview). This grounded program is a propositional ASP program. The building blocks of these programs are \textit{atoms}, \textit{literals} and \textit{rules}. The most elementary are \textit{atoms}, which are propositional variables that can be true or false. A \textit{literal} is an atom or a negated atom. Beside strong negation, ASP uses a special kind of negation, namely \textit{negation-as-failure} (naf), denoted with `$not$'. For a literal $a$ we call `$not \, a$' the naf-literal associated with $a$. The \textit{extended literals} consist of all literals and their associated naf-literals. A \textit{disjunctive rule} has the following form
\begin{align*}
a_1 \vee \hdots \vee a_k \la b_1,\hdots,b_m,not \,c_1,\hdots, not \, c_n
\end{align*}
where $a_1, \hdots, a_k, b_1,\hdots,b_m,c_1,\hdots,c_n$ are literals from a fixed set $\mathcal{L}$, determined by a fixed set $\mathcal{A}$ of atoms. We call $a_1 \vee \hdots \vee a_k$ the head of the rule while the set of extended literals $b_1,\hdots,b_m,not \,c_1,\hdots$, $not \, c_n$ is called the \textit{body}.
The rule above intuitively encodes that $a_1$, $a_2$, $\hdots$ or $a_k$ is true when we have evidence that $b_1,\hdots,b_m$ are true and we have no evidence that at least one of $c_1,\hdots,c_n$ are true. When a rule has an empty body, we call it a \textit{fact}; when the head is empty, we speak of a \textit{constraint}. A rule without occurrences of $not$ is called a \textit{simple disjunctive rule}. A \textit{simple disjunctive} ASP program is a finite collection of simple disjunctive rules and similarly a \textit{disjunctive} ASP program $\PP$ is a finite collection of disjunctive rules. If each rule head consists of at most one literal, we speak of a \textit{normal} ASP program. 

We define an \textit{interpretation} $I$ of a disjunctive ASP program $\PP$ as a subset of $\mathcal{L}$. An interpretation $I$ \textit{satisfies} a simple disjunctive rule $a_1 \vee \hdots \vee a_k$ $\la b_1,\hdots,b_m$ when $a_1 \in I \vee \hdots \vee a_k \in I$ or $\{b_1,\hdots,b_m\} \not \subseteq I$. An interpretation which satisfies all rules of a simple disjunctive program is called a \textit{model} of that program. 
An interpretation $I$ is an \textit{answer set} of a simple disjunctive program $\PP$ iff it is a minimal model of $\PP$, i.e.\ no strict subset of $I$ is a model of $\PP$~\cite{ASPGel88}.
The \textit{reduct} $\PP^I$ of a disjunctive ASP program $\PP$ w.r.t.\ an interpretation $I$ is defined as the simple disjunctive ASP program $\PP^I=\{a_1 \vee \hdots \vee a_k \la b_1,\hdots,b_m \,|\, (a_1 \vee \hdots \vee a_k \la b_1,\hdots,b_m,not \,c_1,\hdots, not \, c_n) \in \PP, \{c_1,\hdots,c_n\} \cap I = \es \}$. An interpretation $I$ of a disjunctive ASP program $\PP$ is an answer set of $\PP$ iff $I$ is an answer set of $\PP^I$.
\begin{example}
Let $\PP$ be the ASP program with the following 4 rules:
\begin{align*}
man(john) &\la,\quad person(john)\la,\quad person(fiona)\la \\
woman(X) \vee child(X) &\la person(X), not\, man(X)
\end{align*}
The last rule is grounded to 2 rules in which $X$ is resp.\ replaced by $john$ and by $fiona$. We check that the interpretation $I=\{man(john), woman(fiona)$, $person(john)$, $person(fiona)\}$ is an answer set of the ground version of $\PP$ by computing the reduct. The grounded rule with $X=john$ is deleted since $man(john)$ is in $I$. The reduct $\PP^I$ is:
\allowdisplaybreaks
\begin{align*}
man(john) &\la,\quad person(john)\la, \quad person(fiona)\la \\
woman(fiona) \vee child(fiona) &\la person(fiona)
\end{align*}
The first 3 rules are facts, hence their heads will be in any answer set. The fourth rule encodes that any person who is not a man, is a woman or child. It is clear that $I$ is a minimal model of this simple program, so $I$ is an answer set of $\PP$. By replacing $woman(fiona)$ by $child(fiona)$ in $I$, another answer set is obtained.
\end{example}

To automatically compute the answer sets of the programs in this paper, we have used the ASP solver DLV\footnote[2]{Available from www.dlvsystems.com}, due to its ability to handle predicates, disjunction and numeric values (with some built-in aggregate functions). The numeric values are only used for grounding.

\section{Modeling the Stable Marriage Problem in ASP} \label{sect:SMPinASP}
In this section we model variations and generalizations of the SMP with ASP. A few proposals of using nonmonotonic reasoning for modeling the SMP have already been described in the literature. For instance in~\cite{ASPMa90} a specific variant of the SMP is mentioned (in which boys each know a subset of a set of girls and want to be matched to a girl they know) and in~\cite{SMPDung95} an abductive program is used to find a stable set of marriages in which two fixed persons are paired, with strict, complete preference lists. To the best of our knowledge, beyond a few specific examples, no comprehensive study has been made of using ASP or related paradigms in this context. In particular, the generality of our ASP framework for weakly stable sets of SMP instances with unacceptablity and/or ties is a significant advantage.
The expression $accept(m,w)$ denotes that a man $m$ and a woman $w$ accept each other as partners. The predicate $manpropose(m,w)$ expresses that man $m$ is willing to propose to woman $w$ and analogously $womanpropose(m,w)$ expresses that woman $w$ is willing to propose to man $m$. Inspired by the Gale-Shapley algorithm, we look for an ASP formalisation to find the stable sets.
\begin{definition}[ASP program induced by SMP with unacc.\ and ties] \label{def:aspsmpindif}
The ASP program $\PP$ induced by an instance $(\{\sigma_M^1,\hdots,\sigma_M^n\},\{\sigma_W^1,\hdots,\sigma_W^p\})$ of the classical SMP with unacceptability and ties is the program containing for every $i\in\{1,\hdots,n\},j\in\{1,\hdots,p\}$ the following rules:
\begin{align}
accept(m_i,w_j) &\la manpropose(m_i,w_j), womanpropose(m_i,w_j) \label{eq:ruleacc}\\
accept(m_i,m_i) &\la \{not\, accept(m_i,w_k) \,|\, k \in acceptable_M^i\} \label{eq:msingleindif} \\
accept(w_j,w_j) &\la \{not\, accept(m_k,w_j) \,|\, k \in acceptable_W^j\} \label{eq:wsingleindif}
\end{align}
and for every $i\in\{1,\hdots,n\}$, $j\in acceptable_M^i$:
\begin{align}
manpropose(m_i,w_j) &\la \{not\, accept(m_i,x)\,|\, x \leq_M^{m_i} w_j  \mbox{ and } w_j \neq x\} \label{eq:mpropindif}
\end{align}
and for every $j\in\{1,\hdots,p\}$, $i\in acceptable_W^j$:
\begin{align}
womanpropose(m_i,w_j) &\la \{not\, accept(x,w_j)\,|\, x \leq_W^{w_j} m_i  \mbox{ and } m_i \neq x\} \label{eq:wpropindif}
\end{align}
\end{definition}
Intuitively (\ref{eq:ruleacc}) means that a man and woman accept each other as partners if they propose to each other. Due to (\ref{eq:msingleindif}), a man accepts himself as a partner (i.e.\ stays single) if no woman in his preference list is prepared to propose to him. Rule (\ref{eq:mpropindif}) states that a man proposes to a woman if he is not paired to a more or equally preferred woman. For $j \in neutral_M^i$ the body of (\ref{eq:mpropindif}) contains $not \, accept(m_i,m_i)$. 
No explicite rules are stated about the number of persons someone can propose to or accept but Proposition \ref{pr:SMPASPindifasss} implies that this is unnecessary.

We illustrate the induced ASP program with an example.
\begin{example} \label{ex:smpasp}
Consider the following instance $(S_M,S_W)$ of the SMP with unacceptability and ties.
Let $M=\{m_1,m_2\}$ and $W=\{w_1,w_2,w_3\}$. 
Furthermore:
\begin{align*}
\sigma_M^1 &= (\{1\},\{2,3\},\{\})\\
\sigma_M^2 &= (\{2\},\{1\})\\
\sigma_W^1 &= (\{1,2\},\{\})\\
\sigma_W^2 &= (\{1\},\{\})\\
\sigma_W^3 &= (\{2\},\{1\},\{\})
\end{align*}
The ASP program induced by this SMP instance is:
\allowdisplaybreaks
\begin{align*}
man(m_1) &\la, \quad man(m_2) \la \\
woman(w_1) &\la, \quad, woman(w_2) \la, \quad woman(w_3) \la\\
accept(X,Y) &\la manpropose(X,Y),womanpropose(X,Y), man(X), woman(Y)\\
manpropose(m_1,w_1) &\la \\
manpropose(m_1,w_2) &\la not\, accept(m_1,w_1),not\, accept(m_1,w_3)\\
manpropose(m_1,w_3) &\la not\, accept(m_1,w_1),not\, accept(m_1,w_2)\\
accept(m_1,m_1) &\la not\, accept(m_1,w_1),not\, accept(m_1,w_2),not\, accept(m_1,w_3)\\
manpropose(m_2,w_2) &\la \\
manpropose(m_2,w_1) &\la  not\,accept(m_2,w_2),not\,accept(m_2,m_2)\\ 
accept(m_2,m_2) &\la not\,accept(m_2,w_2),not\,accept(m_2,w_1)\\
womanpropose(m_1,w_1) &\la not\,accept(m_2,w_1)\\
womanpropose(m_2,w_1) &\la not\,accept(m_1,w_1)\\ 
accept(w_1,w_1)  &\la not\,accept(m_1,w_1), not\,accept(m_2,w_1)\\
womanpropose(m_1,w_2) &\la\\
accept(w_2,w_2)  &\la not\,accept(m_1,w_2)\\
womanpropose(m_2,w_3) &\la \\
womanpropose(m_1,w_3) &\la not\, accept(m_2,w_3)\\
accept(w_3,w_3)  &\la not\,accept(m_1,w_3), not\,accept(m_2,w_3)
\end{align*}
Notice that we use the facts $man$ and $woman$ to capture all the rules of the form (\ref{eq:ruleacc}) at once.
If we run this program in DLV, we get three answer sets containing respectively:
\begin{itemize}
\item $\{accept(m_1,w_3), accept(m_2,w_1), accept(w_2,w_2)\}$,
\item $\{accept(m_1,w_2), accept(m_2,w_1), accept(w_3,w_3)\}$,
\item $\{accept(m_1,w_1), accept(m_2,m_2), accept(w_2,w_2), accept(w_3,w_3)\}$.
\end{itemize}
These correspond to the three weakly stable set of marriages of this SMP instance, namely $\{(m_1,w_3)$, $(m_2,w_1)$, $(w_2,w_2)\}$, $\{(m_1,w_2)$, $(m_2,w_1)$, $(w_3,w_3)\}$ and $\{(m_1,w_1)$, $(m_2,m_2)$, $(w_2,w_2),(w_3,w_3)\}$.
\end{example}

\begin{proposition} \label{pr:SMPASPindifasss}
Let $(S_M,S_W)$ be an instance of the SMP with unacceptability and ties and let $\PP$ be the corresponding ASP program. If $I$ is an answer set of $\PP$, then a weakly stable matching for $(S_M,S_W)$ is given by $\{(x,y) \,|\, accept(x,y)\in I\}$.
\end{proposition}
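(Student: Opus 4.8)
The plan is to first extract from the answer-set semantics a clean fixpoint description of which ground atoms lie in $I$, and then to read off both the matching property and weak stability from that description. The essential preliminary observation is that in the reduct $\PP^I$ the $accept$-atoms never occur positively in a rule body: in the original program they appear only behind $not$ in rules (\ref{eq:msingleindif})--(\ref{eq:wpropindif}), so after forming $\PP^I$ those rules have either collapsed to facts or been deleted, and only rule (\ref{eq:ruleacc}) retains a nonempty positive body. Hence the minimal model $I$ of $\PP^I$ is computed in a single pass, and I obtain the following equivalences for the relevant index ranges: $accept(m_i,w_j)\in I$ iff $manpropose(m_i,w_j)\in I$ and $womanpropose(m_i,w_j)\in I$; $manpropose(m_i,w_j)\in I$ iff $accept(m_i,x)\notin I$ for every $x\leq_M^{m_i}w_j$ with $x\neq w_j$ (and symmetrically for $womanpropose$); and $accept(m_i,m_i)\in I$ iff $accept(m_i,w_k)\notin I$ for all $k\in acceptable_M^i$ (and symmetrically for $accept(w_j,w_j)$). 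Establishing these equivalences rigorously is the step I expect to be the most delicate, precisely because rules (\ref{eq:ruleacc}) and (\ref{eq:mpropindif}) create a cycle through negation; the point to argue carefully is that fixing $I$ in the reduct breaks this cycle and turns the $accept$-atoms into sinks of the positive program, so that the equivalences hold in both directions.

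Second, I would show that $\Sigma:=\{(x,y)\mid accept(x,y)\in I\}$ is a set of marriages. A matched couple can only consist of mutually acceptable partners: since $manpropose(m_i,w_j)$ and $womanpropose(m_i,w_j)$ have rules only for $j\in acceptable_M^i$ and $i\in acceptable_W^j$ respectively, minimality forces $accept(m_i,w_j)\notin I$ otherwise. Uniqueness on the man's side follows from the $manpropose$ equivalence together with the totality of $\leq_M^{m_i}$: if $m_i$ accepted two distinct women $w_j,w_{j'}$ with, say, $w_j\leq_M^{m_i}w_{j'}$, then the $manpropose$ rule for $w_{j'}$ would require $accept(m_i,w_j)\notin I$, a contradiction. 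The single-atom equivalences (\ref{eq:msingleindif})--(\ref{eq:wsingleindif}) then give that each man is either matched to exactly one woman or single, but not both, and symmetrically for each woman; so every person occurs in exactly one pair of $\Sigma$.

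Third, I would verify weak stability. No blocking individual can occur because every person is paired either to an acceptable partner or to himself, and an acceptable partner is by definition weakly preferred to staying single (formally $j\in acceptable_M^i$ yields $w_j\leq_M^{m_i}m_i$), so nobody strictly prefers being single to his actual partner. For the absence of blocking pairs I would argue by contradiction: suppose $m_i$ and $w_j$ block, so both find each other acceptable, $m_i$ strictly prefers $w_j$ to his partner $p_M$, and $w_j$ strictly prefers $m_i$ to her partner $p_W$. Since the unique $x$ with $accept(m_i,x)\in I$ is $p_M$, the relation $w_j<_M^{m_i}p_M$ means no $x$ with $x\leq_M^{m_i}w_j$ and $x\neq w_j$ is accepted by $m_i$, whence the $manpropose$ equivalence yields $manpropose(m_i,w_j)\in I$; symmetrically $womanpropose(m_i,w_j)\in I$. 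By the $accept$ equivalence (\ref{eq:ruleacc}) this forces $accept(m_i,w_j)\in I$, i.e.\ $w_j$ is $m_i$'s partner and $p_M=w_j$, contradicting $w_j\neq p_M$. This contradiction completes the argument, and together the three steps show that $\Sigma$ is a weakly stable matching.
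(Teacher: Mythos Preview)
Your proof is correct and follows essentially the same route as the paper's: both arguments establish (i) that accepted couples are mutually acceptable, (ii) that each person is paired to exactly one partner (possibly themselves), and (iii) that no blocking pair exists, by tracing each conclusion back to which rule in $\PP^I$ could derive the relevant atom. Your organization is slightly tidier --- you isolate the biconditional characterizations of $accept$, $manpropose$, $womanpropose$ once up front and then read everything off from them, whereas the paper re-derives the needed direction at each of its four steps --- and your treatment of blocking pairs is marginally more careful in allowing the partner $p_M$ to be $m_i$ himself, a case the paper's phrasing (``there exist $i\neq i'$ and $j\neq j'$'') glosses over; but neither difference is substantive.
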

\begin{proof}
Let $(S_M,S_W)$ and $\PP$ be as described in the proposition. Because of the symmetry between the men and the women we restrict ourselves to the male case when possible. We prove this proposition in 4 steps.
\begin{enumerate}
\item \textit{For every $i\in \{1,\hdots,n\}$, every $j \in \{1,\hdots,p\}$ and for every answer set $I$ of $\PP$, it holds that $accept(m_i,w_j) \in I$ implies that $j \in acceptable_M^i$ and $i \in acceptable_W^j$.}\\
This can be proved by contradiction. We will prove that for every man $m_i$ and every $j \in unacceptable^i_M$, $accept(m_i,w_j)$ is in no answer set $I$ of the induced ASP program $\PP$. For $accept(m_i,w_j)$ to be in an answer set $I$, the reduct must contain some rule with this literal in the head and a true body. The only rule which can make this happen is the one of the form (\ref{eq:ruleacc}), implying that $manpropose(m_i,w_j)$ should be in $I$. But since $j$ is not in $acceptable^i_M$ there is no rule with $manpropose(m_i,w_j)$ in the head and so $manpropose(m_i,w_j)$ can never be in $I$.
\item \textit{For every answer set $I$ of $\PP$ and every man $m_i$, there exists at most one woman $w_j$ such that $accept(m_i,w_j)\in I$. Similarly, for every woman $w_j$ there exists at most one man $m_i$ such that $accept(m_i,w_j)\in I$. Moreover, if $accept(m_i,m_i)\in I$ then $accept(m_i,w_j)\notin I$ for any $w_j$, and likewise when $accept(w_j,w_j)\in I$ then $accept(m_i,w_j)\notin I$ for any $m_i$.}\\
This can be proved by contradiction. Suppose first that there is an answer set $I$ of $\PP$ that contains $accept(m_i,w_j)$ and $accept(m_i,w_{j'})$ for some man $m_i$ and two different women $w_j$ and $w_{j'}$. The first step implies that $j$ and $j'$ are elements of $acceptable_M^i$. Either man $m_i$ prefers woman $w_j$ to woman $w_{j'}$ ($w_j \leq_M^{m_i} w_{j'}$) or the other way around ($w_{j'} \leq_M^{m_i} w_{j}$) or man $m_i$ has no preference among them ($w_j \leq_M^{m_i} w_{j'}$ and $w_{j'} \leq_M^{m_i} w_{j}$). The first two cases are symmetrical and can be handled analogously. The last case follows from the first case because it has stronger assumptions. We prove the first case and assume that man $m_i$ prefers woman $w_j$ to woman $w_{j'}$. The rules (\ref{eq:mpropindif}) imply the presence of a rule $manpropose(m_i,w_{j'})$ $\la \hdots$, $not \, accept(m_i,w_j), \hdots$ and this is the only rule which can make $manpropose(m_i,w_{j'})$ true (the only rule with this literal in the head). However, since $accept(m_i,w_j)$ is also in the answer set, this rule has a false body so $manpropose(m_i,w_{j'})$ can never be in $I$. Consequently $accept(m_i,w_{j'})$ can never be in $I$ since the only rule with this literal in the head is of the form (\ref{eq:ruleacc}) and this body can never be true, which leads to a contradiction.\\
Secondly assume that $accept(m_i,w_j)$ and $accept(m_i,m_i)$ are both in an answer set $I$ of $\PP$. Again step 1 implies that $j \in acceptable^i_M$. Because of the rules (\ref{eq:msingleindif}) $\PP$ will contain the rule $accept(m_i,m_i) \la$ $\hdots, not \, accept(m_i,w_j), \hdots$. An analogous reasoning as above implies that since $accept(m_i,w_j)$ is in the answer set $I$, $accept(m_i,m_i)$ can never be in $I$.
\item \textit{For every man $m_i$, in every answer set $I$ of $\PP$ exactly one of the following conditions is satisfied}:
\begin{enumerate}
\item \textit{there exists a woman $w_j$ such that $accept(m_i,w_j) \in I$,}
\item $accept(m_i,m_i) \in I$,
\end{enumerate} 
\textit{and similarly for every woman $w_i$.} \\
Suppose $I$ is an arbitrary answer set of $\PP$ and $m_i$ is an arbitrary man. We already know from step 2 that a man cannot be paired to a woman while being single, so both possibilities are disjoint. So suppose there is no woman $w_j$ such that $accept(m_i,w_j)$ is in $I$. $\PP$ will contain the rule (\ref{eq:msingleindif}). Because of our assumptions and the definition of the reduct, this rule will be reduced to $accept(m_i,m_i) \la$, and so $accept(m_i,m_i)$ will be in $I$.
\item For an arbitrary answer set $I$ of $\PP$ the previous steps imply that $I$ produces a set of marriages without blocking individuals. Weak stability also demands the absence of blocking pairs. Suppose by contradiction that there is a blocking pair $(m_i,w_j)$, implying that there exist $i\neq i'$ and $j \neq j'$ such that $accept(m_i,w_{j'}) \in I$ and $accept(m_{i'},w_j) \in I$ while $w_{j} <_M^{m_i} w_{j'}$ and $m_i <_W^{w_j} m_{i'}$. The rules of the form (\ref{eq:ruleacc}), the only ones with the literals $accept(m_i,w_{j'})$ and $accept(m_{i'},w_j)$ in the head, imply that literals $manpropose(m_i,w_{j'})$ and $womanpropose(m_{i'},w_j)$ should be in $I$. But since $w_{j} <_M^{m_i} w_{j'}$ and because of the form of the rules (\ref{eq:mpropindif}) there are fewer conditions to be fulfilled for $manpropose(m_i,w_j)$ to be in $I$ than for $manpropose(m_i,w_{j'})$ to be in $I$. So $manpropose(m_i,w_j)$ should be in $I$ as well. A similar reasoning implies that $womanpropose(m_i,w_j)$ should be in $I$. But now the rules of the form (\ref{eq:ruleacc}) imply that $accept(m_i,w_j)$ should be in $I$, contradicting step 2 since $accept(m_i,w_{j'})$ and $accept(m_{i'},w_j)$ are already in $I$.
\end{enumerate}
\end{proof}

\begin{proposition} \label{pr:SMPASPindifssas}
Let $(S_M,S_W)$ be an instance of the SMP with unacceptability and ties, and let $\PP$ be the corresponding ASP program. If $\{(x_{1},y_{1})$, $\hdots$, $(x_{k},y_{k})\}$ is a weakly stable matching for $(S_M,S_W)$ then $\PP$ has the following answer set $I$:
\allowdisplaybreaks
\begin{align*}
&\{manpropose(x_{i},y)\,|\, i \in \{1,\hdots,k\}, x_{i}\in M, y <_M^{x_i} y_{i})\} \\
\cup &\{womanpropose(x,y_{i})\,|\,i \in \{1,\hdots,k\}, y_{i}\in W, x <_W^{y_i} x_i\} \\
\cup &\{accept(x_{i},y_{i}) \,|\, i \in \{1,\hdots,k\}\} \\
\cup &\{manpropose(x_i,y_i) \,|\,  i \in \{1,\hdots,k\}, x_i \neq y_i\} \\
\cup&\{womanpropose(x_i,y_i) \,|\,  i \in \{1,\hdots,k\}, x_i \neq y_i \} 
\end{align*} 
\end{proposition}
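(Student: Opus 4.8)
The overall strategy: since $\PP$ is a \emph{normal} program (every rule in Definition~\ref{def:aspsmpindif} has a single atom in its head), I will use the characterisation that $I$ is an answer set of $\PP$ exactly when $I$ is a minimal model of the reduct $\PP^I$; and because $\PP^I$ is then definite (no negation, no disjunction), its minimal model is unique and equals its least model. So it suffices to (i) compute $\PP^I$, (ii) check that $I$ is a model of $\PP^I$, and (iii) check that every atom of $I$ is derivable in $\PP^I$. Throughout I exploit that a set of marriages assigns each man and each woman to exactly one pair $(x_i,y_i)$, giving a well-defined partner for every person (a woman, or the person himself when single), and by the man/woman symmetry I treat only the male rules.

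Computing the reduct. The accept-rules of form~(\ref{eq:ruleacc}) contain no $not$ and are copied verbatim. A single-rule~(\ref{eq:msingleindif}) for $m_i$ survives iff no $accept(m_i,w_k)$ with $k\in acceptable_M^i$ lies in $I$, which by construction of $I$ happens exactly when $m_i$ is single; it then reduces to the fact $accept(m_i,m_i)\la$, and is otherwise deleted because $m_i$'s partner supplies a naf-literal in $I$. For the propose-rules~(\ref{eq:mpropindif}) lies the point where ties must be handled with care: if $m_i$ has partner $y_i$, his unique accept atom $accept(m_i,y_i)$ occurs in the naf-body of the rule for $manpropose(m_i,w_j)$ exactly when $y_i\leq_M^{m_i}w_j$ and $y_i\neq w_j$. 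Hence that rule is deleted for every $w_j$ equally or less preferred than $y_i$ (other than $y_i$ itself) and survives, as the fact $manpropose(m_i,w_j)\la$, precisely for $w_j=y_i$ and for those $w_j$ with $w_j<_M^{m_i}y_i$.

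Verifying $I$ is a model. The surviving single- and propose-facts have heads $accept(m_i,m_i)$ (for single $m_i$) and $manpropose(m_i,w_j)$ for $w_j=y_i$ or $w_j<_M^{m_i}y_i$, all of which lie in $I$ by construction, so these rules are satisfied. The delicate rules are those of form~(\ref{eq:ruleacc}), and here weak stability enters decisively. If for some pair $(m_i,w_j)$ that is \emph{not} a matched couple both $manpropose(m_i,w_j)\in I$ and $womanpropose(m_i,w_j)\in I$, then by the shape of $I$ the first forces $w_j$ to be strictly preferred by $m_i$ to his partner and the second forces $m_i$ to be strictly preferred by $w_j$ to her partner, i.e.\ $(m_i,w_j)$ is a blocking pair, contradicting weak stability. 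Thus the body of~(\ref{eq:ruleacc}) is simultaneously true only for genuine couples, for which $accept(m_i,w_j)\in I$. (Here I also use the absence of blocking \emph{individuals} to guarantee that the first block of $I$ never contains a spurious self-proposal $manpropose(m_i,m_i)$: a matched $m_i$ cannot satisfy $m_i<_M^{m_i}y_i$, and a single $m_i$ trivially cannot, so $y$ always ranges over genuine acceptable women.)

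Minimality and the main obstacle. It remains to derive every atom of $I$ in $\PP^I$: the singles' accept atoms and all proposal atoms are already facts of $\PP^I$ by the reduct computation, while each couple's $accept(x_i,y_i)$ follows from~(\ref{eq:ruleacc}) applied to $manpropose(x_i,y_i)$ and $womanpropose(x_i,y_i)$, which are in $I$ (its fourth and fifth blocks). Hence $I$ is contained in the least model of $\PP^I$, the model check of step (ii) gives the reverse inclusion, and so $I$ is exactly the least model, i.e.\ an answer set. The main obstacle I anticipate is the bookkeeping of the reduct under ties: I must keep straight the contrast between the \emph{strict} condition $y<_M^{x_i}y_i$ used to populate $I$ and the \emph{non-strict} condition $x\leq_M^{m_i}w_j$, $x\neq w_j$ in the propose-rule bodies, and verify at each step that equally-preferred-but-distinct alternatives are both excluded from $I$ and deleted from $\PP^I$. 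Locating exactly where no-blocking-pair (for~(\ref{eq:ruleacc})) and no-blocking-individual (for the self-proposal issue) are invoked is the conceptual crux; the remaining checks are routine.
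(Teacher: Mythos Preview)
Your proposal is correct and follows essentially the same approach as the paper: compute the reduct $\PP^I$, observe that the propose- and single-rules survive precisely as the facts already present in $I$ while the accept-rules~(\ref{eq:ruleacc}) fire only on genuine couples because weak stability rules out blocking pairs, and conclude that $I$ is the (unique) minimal model of $\PP^I$. Your treatment is in fact a bit more explicit than the paper's on two points---the least-model characterisation for definite reducts and the care needed with the strict/non-strict preference distinction under ties---but the structure of the argument is the same.
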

\begin{proof}
Suppose we have a stable set of marriages $S=\{(x_1,y_1),\hdots,(x_k,y_k)\}$, implying that every $y_i$ is an acceptable partner of $x_i$ and the other way around. The rules of the form (\ref{eq:ruleacc}) do not alter when forming the reduct, but the other rules do as those contain naf-literals. Notice first that the stability of $S$ implies that there cannot be an unmarried couple $(m,w)$, with $m$ a man and $w$ a woman, such that $manpropose(m,w)$ is in $I$ and $womanpropose(m,w)$ is in $I$. By definition of $I$ this would mean that they both strictly prefer each other to their current partner in $S$. This means they would form a blocking pair, but since $S$ was stable, that is impossible. So the rules of the form (\ref{eq:ruleacc}) will be applied exactly for married couples $(m_i,w_j)$, since by definition of $I$ $manpropose(m_i,w_j)$ and $womanpropose(m_i,w_j)$ are both in $I$ under these conditions. For other cases the rule will also be fulfilled since the body will be false. This reasoning implies that the unique minimal model of the reduct w.r.t.\ $I$ should indeed contain $accept(m_i,w_j)$ for every married couple $(m_i,w_j)$ in $S$.
Since $S$ is a stable set of marriages, every person is either married or single. If a man $m_i$ is single, there will be no other literal of the form $accept(m_i,.)$ in $I$, so rule (\ref{eq:msingleindif}) will reduce to a fact $accept(m_i,m_i) \la$, which is obviously fulfilled by $I$. Similarly if a woman $w_j$ is single. Any other rule of the form (\ref{eq:msingleindif}) or (\ref{eq:wsingleindif}) is deleted because $m_i$ or $w_j$ is not single in that case and thus there is some literal of the form $accept(m_i,w)$ for some woman $w$ and some literal of the form $accept(m,w_j)$ for some man $m$ in $I$, falsifying the body of the rules. If $m_i$ is single, then $accept(m_i,m_i)$ is in $I$ and this is the only literal of the form $accept(m_i,.)$ in $I$, so the rules of the form (\ref{eq:mpropindif}) will all be reduced to facts. The rule heads of these facts should be in the minimal model of the reduct and are indeed in $I$ since the women $w$ for which $manpropose(m_i,w)$ is in $I$ are exactly those who are strictly preferred to staying single. The rules of the form (\ref{eq:mpropindif}) for women $w_j$ in $neutral_M^i$ will all be deleted in this case, because $accept(m_i,m_i)$ is in $I$. If man $m_i$ is married to a certain woman $w_j$ in the stable set $S$ then the rules of the form (\ref{eq:mpropindif}) will reduce to facts of the form $manpropose(m_i,w) \la$ for every woman $w$ who is strictly preferred to $w_j$ and will be deleted for every other woman appearing in the head, because those rules will contain $not\, accept(m_i,w_j)$ in the body. Again $I$ contains these facts by definition, as the minimal model of the reduct should. We can use an analogous reasoning for the women. So the presence of the literals of the form $manpropose(.,.)$, $womanpropose(.,.)$ and $accept(.,.)$ in $I$ is required in the unique minimal model of the reduct w.r.t.\ $I$. 
We have proved that every literal in $I$ should be the minimal model of the reduct and that every rule of the reduct is fulfilled by $I$, implying that $I$ is an answer set of $\PP$. 
\end{proof}

In~\cite{SMPMan02} it is shown that the decision problem `is the pair $(m,w)$ stable?' for a given SMP instance with unacceptablity and ties is an NP-complete problem, even in the absence of unacceptability. A pair $(m,w)$ is \textit{stable} if there exists a stable set that contains $(m,w)$. It is straightforward to see that we can reformulate this decision problem as `does there exist an answer set  of the induced normal ASP program $\PP$ which contains the literal $accept(m,w)$?' (i.e.\ brave reasoning), which is known to be an NP-complete problem~\cite{ASPBa03}. So our model forms a suitable framework for these kind of decision problems concerning the SMP. 

\section{Selecting Preferred Stable Sets} \label{sec:oss}

\subsection{Notions of Optimality of Stable Sets}
When several stable matchings can be found for an instance of the SMP, some may be more interesting than others. The stable set found by the G-S algorithm is \textit{M-optimal}~\cite{SMPRoth90}, i.e.\ every man likes this set at least as well as any other stable set. 
Exchanging the roles of men and women in the G-S algorithm yields a \textit{W-optimal} stable set~\cite{SMPGale62}, optimal from the point of view of the women. 

While some applications may require us to favour either the men or the women, in others it makes more sense to treat both parties equally. To formalize some commonly considered notions of fairness and optimality w.r.t.\ the SMP, we define the cost $c_x(S)$ of a stable set $S$ to an individual $x$, where $c_x(S)=k$ if $x$ has been matched with his or her $k^{th}$ preferred partner. More precisely, for $x=m_i$ a man, we define $c_{m_i}(S) = | \{ z : z <_M^{m_i} y \} |+1$ where $y$ is the partner of $x$ in $S$; for $x=w_j$ a woman, $c_x$ is defined analogously. So in case of ties we assign the same list position to equally preferred partners, as illustrated in Example \ref{ex:cost}.
\begin{example}\label{ex:cost}
Let $x=m_1$ be a man with preference list $\sigma_M^1 = (\{1\},\{2, 3\},\{4\})$ then $w_1$ as partner of $x$ in some set of marriages $S$ would yield $c_x(S)=1$, $w_2$ and $w_3$ yield $c_x(S)=2$ and $w_4$ yields $c_x(S)=4$. If $m_1$ would be single in $S$, then the cost $c_x(S)$ is $4$, since $m_1$ prefers women $w_1,w_2$ and $w_3$ to being single, but is indifferent between being paired to $w_4$ or staying single.
\end{example}
\begin{definition}\label{def:optss}
For $S$ a set of marriages,
\begin{itemize}
\item the sex-equalness cost is defined as $c_{sexeq}(S)=|\sum_{x \in M}{c_x(S)} - \sum_{x \in W}{c_x(S)}|$,
\item the egalitarian cost is defined as $c_{weight}(S)=\sum_{x \in M \cup W}{c_x(S)}$, 
\item the regret cost is defined as $c_{regret}(S)=\max_{x \in M \cup W}{c_x(S)}$, and
\item the cardinality cost is defined as $c_{singles}(S)=|\{z : (z,z) \in S\}|$.
\end{itemize}
$S$ is a sex-equal stable set iff $S$ is a stable set with minimal sex-equalness cost. Similarly, $S$ is an egalitarian (resp.\ minmum regret, maximum cardinality) stable set iff $S$ is a stable set with minimal egalitarian (resp.\ regret or cardinality) cost. 
\end{definition}
A \textit{sex-equal stable set} assigns an equal importance to the preferences of the men and women. An \textit{egalitarian stable set} is a stable set in which the preferences of every individual are considered to be equally important. In~\cite{SMPXu11} the use of an egalitarian stable set is proposed to optimally match virtual machines (VM) to servers in order to improve cloud computing by equalizing the importance of migration overhead in the data center network and VM migration performance.
A \textit{minimum regret stable set} is optimal for the person who is worst off. A \textit{maximal or minimal cardinality stable set} is a stable set with resp.\ as few or as many singles as possible. Examples of practical applications include an efficient kidney exchange program~\cite{SMPRoth05} and the National Resident Matching Program\footnote[3]{www.nrmp.org}~\cite{SMPMan02}. Maximizing cardinality garantuees that as many donors as possible will get a compatible donor and as many medical graduates as possible will get a position.

Table \ref{tab:compl} presents an overview of known complexity results\footnote[4]{Throughout this paper we assume that P $\neq$ NP.} concerning finding an optimal stable set. Typically the presence of ties leads to an increase of complexity. Manlove et al. \cite{SMPMan99,SMPMan02} proved that the problem of finding a maximum (or minimum) cardinality stable set for a given instance of the SMP with unacceptability and ties is NP-hard. Using this result, the problem of finding an egalitarian or minimum regret stable matching for a given SMP instance with ties is proved to be NP-hard~\cite{SMPMan02}, even if the ties occur on one side only and each tie is of length 2 (i.e.\ each set in a preference list has size at most 2). 
If there are no ties, the problem of finding an egalitarian or minimum regret stable set is solvable in polynomial time~\cite{SMPIrv87,SMPGus87}. Since all stable sets consist of $n$ couples in the classical SMP, the G-S algorithm trivially finds a maximum (or minimum) cardinality~\cite{SMPGale62}. For an SMP instance with unacceptability the number of couples in a stable set is constant~\cite{SMPGale85}, so finding a maximum cardinality stable set reduces to finding a stable set, which is known to be solvable in polynomial time. Surprisingly, finding a sex-equal stable set for a classical SMP instance is NP-hard~\cite{SMPKat93}, even if the preference lists are bound in length by 3~\cite{SMPMc12}.
\begin{table}[h!]
\caption{Literature complexity results for finding an optimal stable set}
\begin{center}
\begin{tabular}{c|cccc}
& sex-equal & egalitarian & min.\ regret & max.\ card.\ \\
\hline
SMP & NP-hard~\cite{SMPKat93} & P ($O(n^4)$~\cite{SMPIrv87}) & P ($O(n^2)$~\cite{SMPGus87}) & P ($O(n^2)$~\cite{SMPGale62})\\
SMP + unacc & NP-hard~\cite{SMPMc12} & & & P~\cite{SMPGale85}\\
SMP + ties & & NP-hard~\cite{SMPMan02} & NP-hard~\cite{SMPMan02} &\\
SMP + \{unacc,ties\} & & & & NP-hard~\cite{SMPMan99,SMPMan02}\\
\end{tabular}
\end{center}
\label{tab:compl}
\end{table}

Between brackets we mention in Table \ref{tab:compl} the complexity of an algorithm that finds an optimal stable set if one exists, in function of the number of men $n$. To the best of our knowledge, the only exact algorithm tackling an NP-hard problem from Table \ref{tab:compl} finds a sex-equal stable set for an SMP instance in which the strict preference lists of men and/or women are bounded in length by a constant~\cite{SMPMc12}. To the best of our knowledge, no exact implementations exist to find an optimal stable set for an SMP instance with ties, regardless of the presence of unacceptability and regardless which notion of optimality from Table \ref{tab:compl} is used. Our approach yields an exact implementation of all problems mentioned in Table \ref{tab:compl}.

\subsection{Finding Optimal Stable Sets using Disjunctive ASP}
As we discuss next, we can extend our ASP encoding of the SMP such that the optimal stable sets correspond to the answer sets of an associated ASP program. In particular, we use a saturation technique~\cite{ASPEit97,ASPBa03} to filter non-optimal answer sets. 
Intuitively, the idea is to create a program with 3 components: (i) a first part describing the solution candidates, (ii) a second part also describing the solution candidates since comparison of solutions requires multiple solution candidates within the same answer set whereas the first part in itself produces one solution per answer set, (iii) a third part comparing the solutions described in the first two parts and selecting the preferred solutions by saturation.
It is known that the presence of negation-as-failure can cause problems when applying saturation. Therefore, we use a SAT encoding~\cite{ASPJan04} of the ASP program in Definition \ref{def:aspsmpindif} and define a disjunctive naf-free ASP program in Definition \ref{def:disjaspsmpindif} which selects particular models of the SAT problem. 
Notice that our original normal program is absolutely tight, i.e.\ there is no finite sequence $l_1,l_2,\hdots$ of literals such that for every $i$ there is a program rule for which $l_{i+1}$ is a positive body literal and $l_i$ is in the head~\cite{ASPEr03}. 
We use the completion and a translation of our ASP program to SAT to derive Definition \ref{def:disjaspsmpindif}. The completion of a normal ASP program is a set of propositional formulas. For every atom $a$ with $a \la body_i$ ($i \in \{1,\hdots,k\}$) all the program rules with head $a$, the propositional formula $a \equiv body'_1 \vee \hdots \vee body'_k$ is in the completion of that program. If an atoms $a$ of the program does not occur in any rule head, than $a \equiv \, \perp$ is in the completion of the program. Similarly the completion of the program contains the propositional formula $\perp \,\equiv body'_1 \vee \hdots \vee body'_l$ with with $\la body_i$ ($i \in \{1,\hdots,l\}$) all the program constraints. For every $i$, $body'_i$ is the conjunction of literals derived from $body_i$ by replacing every occurence of `$not$' with `$\neg$'. Because our program is absolutely tight, we know that the completion will correspond to it~\cite{ASPEr03}. 
Applied to the induced normal ASP program in Definition \ref{def:aspsmpindif}, the completion becomes: 
\allowdisplaybreaks
\begin{align*}
&\{accept(m_i,w_j) \equiv manpropose(m_i,w_j) \wedge womanpropose(m_i,w_j) \,|\, i \in\{1,\hdots,n\},j\in\{1,\hdots,p\} \}\\
\cup &\{accept(m_i,m_i) \equiv \bigwedge_{k \in acceptable_M^i}{ \neg accept(m_i,w_k)} \,|\, i \in\{1,\hdots,n\}\}\\
\cup &\{accept(w_j,w_j) \equiv \bigwedge_{k \in acceptable_W^j}{ \neg accept(m_k,w_j)} \,|\, j \in\{1,\hdots,p\}\}\\
\cup &\{manpropose(m_i,w_j) \equiv \bigwedge_{x \leq_M^{m_i} w_j, x \neq w_j}{\neg accept(m_i,x)} \,|\, i \in\{1,\hdots,n\}, j\in acceptable_M^i\}\\
\cup &\{womanpropose(m_i,w_j) \equiv \bigwedge_{x \leq_W^{w_j} m_i, x \neq m_i}{\neg accept(x,w_j)} \,|\, j\in\{1,\hdots,p\}, i \in acceptable_W^j\}\\
\cup &\{manpropose(m_i,w_j) \equiv \,\perp \,|\, i \in \{1,\hdots,n\}, j \in unacceptable_M^i \} \\
\cup &\{womanpropose(m_i,w_j) \equiv \,\perp \,|\, j \in \{1,\hdots,p\}, i \in unacceptable_W^j \}
\end{align*}
Using the formulas of the completion corresponding to the normal ASP program in Definition \ref{def:aspsmpindif}, we can define a corresponding disjunctive ASP program without negation-as-failure. Lemma \ref{lem:disj} follows form the fact that the completion corresponds to the original program~\cite{ASPEr03}.
\begin{definition}[Induced disj.\ naf-free ASP program] \label{def:disjaspsmpindif}
The disjunctive naf-free ASP program $\PP_{disj}$ induced by an SMP instance $(S_M,S_W)$ with unacceptability and ties contains the following rules for $i\in\{1,\hdots,n\},j\in\{1,\hdots,p\}$:
\allowdisplaybreaks
\begin{align*}
\neg accept(m_i,w_j) \vee manpropose(m_i,w_j) &\la \\
\neg accept(m_i,w_j) \vee womanpropose(m_i,w_j) &\la\\
accept(m_i,w_j) \vee \neg manpropose(m_i,w_j) \vee \neg womanpropose(m_i,w_j) &\la
\end{align*}
For every $i\in\{1,\hdots,n\}$, $l\in unacceptable_M^i$, $j \in acceptable_M^i$, $x \leq_M^{m_i} w_j, x\neq w_j$ $\PP_{disj}$  contains:
\begin{align*}
\bigvee_{k \in acceptable_M^i} accept(m_i,w_k) \vee accept(m_i,m_i) &\la\\
\neg accept(m_i,m_i) \vee \neg accept(m_i,w_j) &\la \\
\neg manpropose(m_i,w_j) \vee \neg accept(m_i,x) &\la \\
\bigvee_{x \leq_M^{m_i} w_j, x \neq w_j} accept(m_i,x) \vee manpropose(m_i,w_j) &\la\\
\neg manpropose(m_i,w_l) &\la
\end{align*}
and symmetrical for $j \in \{1,\hdots,p\}$ and $womanpropose$.
\end{definition}

\begin{lemma} \label{lem:disj}
Let $\PP$ be the normal ASP program from Definition \ref{def:aspsmpindif} and $\PP_{disj}$ the disjunctive ASP program from Definition \ref{def:disjaspsmpindif}. It holds that for any answer set $I$ of $\PP$ there exists an answer set $I_{disj}$ of $\PP_{disj}$ such that the atoms of $I$ and $I_{disj}$ coincide. Conversely for any answer set $I_{disj}$ of $\PP_{disj}$ there exists an answer set $I$ of $\PP$ such that the atoms of $I$ and $I_{disj}$ coincide.
\end{lemma}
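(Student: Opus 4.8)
The plan is to use the Clark completion $\mathrm{comp}(\PP)$ as a common bridge between the two programs, exploiting that $\PP_{disj}$ was built precisely as the clausal form of $\mathrm{comp}(\PP)$. Concretely I would prove two correspondences and compose them: (a) the answer sets of the normal program $\PP$ coincide with the classical models of $\mathrm{comp}(\PP)$, and (b) the answer sets of the disjunctive naf-free program $\PP_{disj}$ coincide, on their positive atoms, with the classical models of $\mathrm{comp}(\PP)$. Identifying a model $J$ of $\mathrm{comp}(\PP)$ (a total truth assignment) with the literal set $J \cup \{\neg a \mid a \notin J\}$, both correspondences share the same notion of ``atoms made true'', so chaining them gives the claimed atom-preserving bijection in both directions.

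For step (a) I would simply invoke the tightness hypothesis already recorded before the lemma: since $\PP$ is absolutely tight, the theorem of Erdem and Lifschitz~\cite{ASPEr03} says its answer sets are exactly the models of its completion. Thus $I$ is an answer set of $\PP$ iff, read as the interpretation making exactly the atoms of $I$ true, it is a model of $\mathrm{comp}(\PP)$.

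Step (b) is the substantive part. I would first observe that each rule of Definition~\ref{def:disjaspsmpindif} is exactly one conjunct of the CNF of the displayed completion, written as a naf-free disjunctive rule with empty body: the three $accept(m_i,w_j)$ rules are the clauses of $accept \equiv manpropose \wedge womanpropose$, the pair around $accept(m_i,m_i)$ are the clauses of the single-equivalence, the two $manpropose$ rules are the clauses of $manpropose \equiv \bigwedge \neg accept$, and $\neg manpropose(m_i,w_l)\la$ encodes $manpropose \equiv \,\perp$ (symmetrically for the women). Since $\PP_{disj}$ is naf-free, its answer sets are the minimal consistent models of this clause set, so it remains to match these with the classical models of $\mathrm{comp}(\PP)$; this is the saturation/SAT correspondence of~\cite{ASPJan04}, which I would verify directly here. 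The key point is that every completion equivalence has the single-conjunction shape $a \equiv \bigwedge_j \ell_j$, yielding forward clauses $\neg a \vee \ell_j$ and one backward clause $a \vee \bigvee_j \neg \ell_j$. I would then show that any consistent model $M$ of these clauses must decide every atom: if some $a$ were left undecided, the forward clauses (with $\neg a \notin M$) would force all $\ell_j \in M$, while the backward clause (with $a \notin M$) would force some $\neg \ell_j \in M$, contradicting consistency (the $\perp$-case $a\equiv\,\perp$ decides $a$ outright). This is essentially steps~1--2 of Proposition~\ref{pr:SMPASPindifasss} replayed over the completion.

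Once totality is established the rest is automatic, and I expect the totality argument to be the only real obstacle — minimal models of an arbitrary clause set need not decide every atom, and it is precisely the matched forward/backward structure of a completion (plus the single-conjunction form one reads off the displayed completion) that rescues it. Indeed, totality makes minimality free: any proper submodel $M' \subsetneq M$ would drop some literal and hence leave its atom undecided, so $M'$ cannot itself be a model. Therefore the answer sets of $\PP_{disj}$ are exactly the total consistent models of the clause set, which are in evident bijection (via $J \mapsto J \cup \{\neg a \mid a \notin J\}$) with the classical models of $\mathrm{comp}(\PP)$, preserving positive atoms. Composing this with step (a) through $\mathrm{comp}(\PP)$ yields both directions of the lemma.
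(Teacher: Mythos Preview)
Your proposal is correct and follows the same route the paper indicates: the paper states that the lemma ``follows from the fact that the completion corresponds to the original program~\cite{ASPEr03}'', i.e.\ exactly your step~(a), and the construction of $\PP_{disj}$ as the clausal form of $\mathrm{comp}(\PP)$ (with the SAT-encoding reference~\cite{ASPJan04}) is precisely the bridge you use for step~(b). You simply make explicit the totality argument for the minimal consistent models of the clause set, which the paper leaves implicit; your observation that each completion equivalence here has the single-conjunction shape $a\equiv\bigwedge_j\ell_j$ (since every atom of $\PP$ has at most one defining rule) is what makes that argument go through cleanly.
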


\subsection{ASP Program to Select Optimal Solutions} \label{sec:defOSSASP}
Let $(S_M,S_W)$ be an SMP instance with unacceptability and ties, with $S_M = \{\sigma_M^1,\hdots,\sigma_M^n\}$ and $S_W = \{\sigma_W^1,\hdots,\sigma_W^p\}$, and let $\PP_{norm}$ be the induced normal ASP program from Definition \ref{def:aspsmpindif}. Our technique for extending this program to a program that can respectively optimize for the sex-equalness, egalitarian, minimum regret and maximum cardinality criterion is in each case very similar. We start by explaining it for the case of sex-equalness. Our first step is to add a set of rules that compute the sex-equalness cost of a set of marriages. For every man $m_i$ and every woman $w_j$ such that $j \in \sigma^i_M(k)$ we use the following rule to determine the cost for $m_i$ if $w_j$ would be his partner:
\begin{align}
mancost(i,k) &\la accept(m_i,w_j) \label{eq:costm}
\end{align}
and similarly for every $w_j$ and every $m_i$ such that $i \in \sigma^j_W(k)$:
\begin{align}
womancost(j,k) &\la accept(m_i,w_j) \label{eq:costw}
\end{align}
We also use the following rules with $i$ ranging from $1$ to $n$ and $j$ from $1$ to $p$:
\allowdisplaybreaks
\begin{align}
mancost(i,|\sigma^i_M|) &\la accept(m_i,m_i) \label{eq:costmm}\\
womancost(j,|\sigma^j_W|) &\la accept(w_j,w_j) \label{eq:costww}\\
manweight(Z) &\la \#sum\{B,A : mancost(A,B)\}=Z, \#int(Z) \label{eq:weightm}\\
womanweight(Z) &\la \#sum\{B,A : womancost(A,B)\}=Z, \#int(Z)  \label{eq:weightw}\\
sexeq(Z) \la man&weight(X), womanweight(Y), Z=X-Y \nonumber\\
sexeq(Z) \la man&weight(X), womanweight(Y), Z=Y-X \label{eq:sexeq}
\end{align}
Rules (\ref{eq:costmm}) and (\ref{eq:costww}) state staying single leads to the highest cost. Rule (\ref{eq:weightm}) determines the sum of the male costs\footnotemark[5] and similarly (\ref{eq:weightw}) determines the sum of the female costs. According to Definition \ref{def:optss} the absolute difference of these values yields the sex-equalness cost, as determined by rules (\ref{eq:sexeq}). Since numeric variables are restricted to positive integers in DLV, we omit conditions as `$X\geq Y$' or `$X < Y$'. The program $\PP_{norm}$ extended with rules (\ref{eq:costm}) -- (\ref{eq:sexeq}) is denoted $\PP^{sexeq}_{ext}$.
\footnotetext[5]{$\#sum$, $\#max$, $\#int$ and $\#count$ are DLV aggregate functions. The `$A$' mentioned as variable in $\#sum$ indicates that a cost must be included for every person (otherwise the cost is included only once when persons have the same cost).}
We construct a program $\PP_{sexeq}$, composed by subprograms, that selects optimal solutions. Let $\PP'_{disj}$ be the disjunctive naf-free ASP program, induced by the same SMP instance, in which a prime symbol is added to all literal names (e.g.\ $accept$ becomes $accept'$). Define a new program $\PP'^{sexeq}_{ext}$ with all the rules of $\PP'_{disj}$ in which every occurrence of $\neg atom$ is changed into $natom$ for every atom $atom$, i.e.\ replace all negation symbols by a prefix `$n$'. For every occurring atom $atom$ in $\PP'^{sexeq}_{ext}$, add the following rule to exclude non-consistent solutions\footnote[6]{For instance, $sat \la accept'(m_1,w_1), naccept'(m_1,w_1)$}:
\begin{align}
sat &\la atom, natom \label{eq:geenneg}
\end{align}
Finally add rules (\ref{eq:costm}) -- (\ref{eq:sexeq}) with prime symbols to the literal names to $\PP'^{sexeq}_{ext}$ but replace rule (\ref{eq:weightm}) and rule (\ref{eq:weightw}) by:
\begin{align}
mansum(n,X) &\la mancost(n,X) \nonumber\\
mansum(J,Z) &\la mansum(I,X), mancost(J,Y), Z=X+Y, \#succ(J,I) \nonumber\\ 
manweight(Z) &\la mansum(1,Z) \nonumber\\
womansum(p,X) &\la womancost(p,X) \nonumber\\
womansum(J,Z) &\la womansum(I,X), womancost(J,Y), Z=X+Y, \#succ(J,I) \nonumber\\ 
womanweight(Z) &\la womansum(1,Z) \label{eq:weightsucc}
\end{align}
The DLV aggregate function $\#succ(J,I)$ is true whenever $J+1=I$. The reason we replace the rules with the aggregate function $\#max$ by these rules is to make sure the saturation happens correct. When saturation is used, the DLV aggregate function $\#max$, $\#sum$ and $\#count$ would not yield the right criteriumvalues. Moreover, DLV does not accept these aggregate function in saturation because of the cyclic dependency of literals  within the aggragate functions created by the rules for saturation. These adjusted rules, however, will do the job  because of the successive way they compute the criteriumvalues. This becomes more clear in the proof of Proposition \ref{pr:optssASP}. We define the ASP program $\PP_{sexeq}$ as the union of $\PP^{sexeq}_{ext}$, $\PP'^{sexeq}_{ext}$ and $\PP_{sat}$. The ASP program $\PP_{sat}$ contains the following rules to select minimal solutions based on sex-equalness:
\allowdisplaybreaks
\begin{align} \label{eq:satcrit}
sat  &\la sexeq(X), sexeq'(Y), X \leq Y \\
&\la not \,  sat \label{eq:sat}\\
mancost'(X,Y) &\la  sat, manargcost'_1(X), manargcost'_2(Y) \nonumber \\
womancost'(X,Y) &\la  sat, womanargcost'_1(X), womanargcost'_2(Y)  \label{eq:cost}\\
manpropose'(X,Y) &\la  sat, man(X), woman(Y)  \nonumber\\
womanpropose'(X,Y) &\la  sat, man(X), woman(Y) \nonumber \\
accept'(X,X) & \la  sat, man(X) \nonumber\\
accept'(X,X) & \la  sat, woman(X) \nonumber \\
accept'(X,Y) &\la  sat, man(X), woman(Y) \label{eq:satur}
\end{align}
and analogous to (\ref{eq:satur}) a set of rules with prefix `$n$' for the head predicates.
Finally we add the facts\footnote[7]{The rule $manargcost'_1(1..n) \la$ is DLV-syntax for the $n$ facts $manargcost'_1(1) \la, \hdots, manargcost'_1(n) \la$.} $manargcost'_1(1..n) \la$, $manargcost'_2(1..(p+1)) \la$, $womanargcost'_1(1..p) \la$, $womanargcost'_2(1..(n+1)) \la$, $man(x) \la$ for every man $x$ and $woman(x) \la$ for every woman $x$ to $\PP_{sat}$.
Intuitively the rules of $\PP_{sat}$ express the key idea of saturation. First every answer set is forced to contain the atom $sat$ by rule (\ref{eq:sat}). Then the rules (\ref{eq:cost}) -- (\ref{eq:satur}) and the facts make sure that any answer set should contain all possible literals with a prime symbol that occur in $\PP_{sexeq}$. Rule (\ref{eq:satcrit}) will establish that only optimal solutions will correspond to minimal models and thus lead to answer sets. For any non-optimal solution, the corresponding interpretation containing $sat$ will never be a minimal model of the reduct. It is formally proved in Proposition \ref{pr:optssASP} below that $\PP_{sexeq}$ produces exactly the stable matchings with minimal sex-equalness cost.

Furthermore, only small adjustments to $\PP_{sexeq}$ are needed to create programs $\PP_{weight}$, $\PP_{regret}$, and $\PP_{singles}$ that resp.\ produce egalitarian, minimum regret and maximum cardinality stable sets. Indeed, the ASP program $\PP_{weight}$ can easily be defined as $\PP_{sexeq}$ in which the predicates $sexeq$ and $sexeq'$ are resp.\ replaced by $weight$ and $weight'$ and the rules (\ref{eq:sexeq}) are replaced by (\ref{eq:weight}), determining the egalitarian cost of Definition \ref{def:optss} as the sum of the male and female costs:
\begin{align}
weight(Z) &\la manweight(X),womanweight(Y),Z=X+Y\label{eq:weight}
\end{align}

Similarly the ASP program $\PP_{regret}$ is defined as $\PP_{sexeq}$ in which the predicates $sexeq$ and $sexeq'$ are resp.\ replaced by $regret$ and $regret'$ and rules (\ref{eq:weightm}) -- (\ref{eq:sexeq}) are replaced by the following rules:
\begin{align}
manregret(Z) &\la \#max\{B : mancost(A,B)\}=Z, \#int(Z) \label{eq:regretm}\\
womanregret(Z) &\la \#max\{B : womancost(A,B)\}=Z, \#int(Z) \label{eq:regretw}\\
regret(X) &\la manregret(X), womanregret(Y), X>Y \nonumber\\
regret(Y) &\la manregret(X), womanregret(Y), X \leq Y \label{eq:regret}
\end{align} 
Rule (\ref{eq:regretm}) determines the regret cost but only for the men. Similarly (\ref{eq:regretw}) determines the regret cost for the women. The regret cost as defined in Definition \ref{def:optss} is the maximum of these two values, determined by the rules in (\ref{eq:regret}).
Again we adjust rules (\ref{eq:regretm}) and (\ref{eq:regretw}) for the program part $\PP'^{regret}_{ext}$ by replacing them with a successively computing variant:
\begin{align}
manmax(n,X) &\la mancost(n,X)\nonumber\\
manmax(J,X) &\la manmax(I,X), mancost(J,Y), X \geq Y, \#succ(J,I)\nonumber\\
manmax(J,Y) &\la manmax(I,X), mancost(J,Y), X < Y, \#succ(J,I)\nonumber\\
manregret(Z) &\la manmax(1,Z)\nonumber\\
womanmax(p,X) &\la womancost(p,X)\nonumber\\
womanmax(J,X) &\la womanmax(I,X), womancost(J,Y), X \geq Y, \#succ(J,I)\nonumber\\
womanmax(J,Y) &\la womanmax(I,X), womancost(J,Y), X < Y, \#succ(J,I)\nonumber\\
womanregret(Z) &\la womanmax(1,Z) \label{eq:regretsucc}
\end{align} 

Finally we define the ASP program $\PP_{singles}$ as $\PP_{sexeq}$ in which the predicates $sexeq$ and $sexeq'$ are resp.\ replaced by $singles$ and $singles'$. Furthermore we replace rules (\ref{eq:costm}) -- (\ref{eq:sexeq}) by (\ref{eq:card}), determining the number of singles:
\begin{align}
singles(Z) \la \#count\{B : accept(B,B)\} = Z, \#int(Z) \label{eq:card}
\end{align}
This time we adjust rule (\ref{eq:card}) for the program part $\PP'^{singles}_{ext}$ as follows:
\begin{align}
single(p+i,1) &\la accept(m_i,m_i)\nonumber\\
single(p+i,0) &\la naccept(m_i,m_i)\nonumber\\
single(j,1) &\la accept(w_j,w_j)\nonumber\\
single(j,0) &\la naccept(w_j,w_j)\nonumber\\
singlesum(n+p,X) &\la single(n+p,X)\nonumber\\
singlesum(J,Z) &\la singlesum(I,X), single(J,Y), Z=X+Y, \#succ(J,I)\nonumber\\
singles(Z) &\la sat, singlesum(1,Z) \label{eq:singlessucc}
\end{align}

\begin{example}
We reconsider Example \ref{ex:smpasp}. This SMP instance had 3 stable sets of marriages:
\begin{itemize}
\item $S_1 = \{accept(m_1,w_3), accept(m_2,w_1), accept(w_2,w_2)\}$,
\item $S_2 = \{accept(m_1,w_2), accept(m_2,w_1), accept(w_3,w_3)\}$,
\item $S_3 = \{accept(m_1,w_1), accept(m_2,m_2), accept(w_2,w_2), accept(w_3,w_3)\}$.
\end{itemize}
It is easy to compute the respective regret costs as $c_{regret}(S_1) = 2$ and $c_{regret}(S_2) = c_{regret}(S_3) = 3$. The corresponding program selecting this minimum regret stable set is the program consisting of the rules in Example \ref{ex:smpasp} in addition with:
\allowdisplaybreaks
\begin{align*}
mancost(1,1) &\la accept(m_1,w_1)\\
mancost(1,2) &\la accept(m_1,w_2)\\
mancost(1,2) &\la accept(m_1,w_3)\\
mancost(1,3) &\la accept(m_1,m_1)\\
mancost(2,2) &\la accept(m_2,w_1)\\
mancost(2,1) &\la accept(m_2,w_2)\\
mancost(2,2) &\la accept(m_2,m_2)\\
womancost(1,1) &\la accept(m_1,w_1)\\
womancost(1,1) &\la accept(m_2,w_1)\\
womancost(1,2) &\la accept(w_1,w_1)\\
womancost(2,1) &\la accept(m_1,w_2)\\
womancost(2,2) &\la accept(w_2,w_2)\\
womancost(3,2) &\la accept(m_1,w_3)\\
womancost(3,1) &\la accept(m_2,w_3)\\
womancost(3,3) &\la accept(w_3,w_3)\\
manregret(Z) &\la \#max\{B: mancost(A,B)\}=Z, \#int(Z)\\
womanregret(Z) &\la \#max\{B: womancost(A,B)\}=Z, \#int(Z)\\
regret(X) &\la manregret(X), womanregret(Y), X>Y\\
regret(Y) &\la manregret(X), womanregret(Y), X<=Y
\end{align*}
\begin{align*}
naccept'(M,W) \vee manpropose'(M,W) &\la man(M), woman(W)\\
naccept'(M,W) \vee womanpropose'(M,W) &\la man(M), woman(W)\\
accept'(M,W) \vee nmanpropose'(M,W) \vee nwomanpropose'(M,W) &\la man(M), woman(W)\\
accept'(m_1,w_1) \vee accept'(m_1,w_2) \vee accept'(m_1,w_3) \vee accept'(m_1,m_1) &\la \\
accept'(m_2,w_1) \vee accept'(m_2,w_2) \vee accept'(m_2,m_2) &\la \\
naccept'(m_1,m_1) \vee naccept'(m_1,w_1) &\la \\
naccept'(m_1,m_1) \vee naccept'(m_1,w_2) &\la \\
naccept'(m_1,m_1) \vee naccept'(m_1,w_3) &\la \\
naccept'(m_2,m_2) \vee naccept'(m_2,w_1) &\la \\
naccept'(m_2,m_2) \vee naccept'(m_2,w_2) &\la \\
accept'(m_1,w_1) \vee accept'(m_2,w_1) \vee accept'(w_1,w_1) &\la \\
accept'(m_1,w_2) \vee accept'(w_2,w_2) &\la \\
accept'(m_1,w_3) \vee accept'(m_2,w_3) \vee accept'(w_3,w_3) &\la \\
naccept'(w_1,w_1) \vee naccept'(m_1,w_1) &\la \\
naccept'(w_1,w_1) \vee naccept'(m_2,w_1) &\la \\
naccept'(w_2,w_2) \vee naccept'(m_1,w_2) &\la \\
naccept'(w_3,w_3) \vee naccept'(m_1,w_3) &\la \\
naccept'(w_3,w_3) \vee naccept'(m_2,w_3) &\la \\
nmanpropose'(m_1,w_2) \vee naccept'(m_1,w_1) &\la \\
nmanpropose'(m_1,w_2) \vee naccept'(m_1,w_3) &\la \\
nmanpropose'(m_1,w_3) \vee naccept'(m_1,w_1) &\la \\
nmanpropose'(m_1,w_3) \vee naccept'(m_1,w_2) &\la \\
manpropose'(m_1,w_1) &\la \\
accept'(m_1,w_1) \vee accept'(m_1,w_3) \vee manpropose'(m_1,w_2) &\la \\
accept'(m_1,w_1) \vee accept'(m_1,w_2) \vee manpropose'(m_1,w_3) &\la \\
nmanpropose'(m_2,w_1) \vee naccept'(m_2,w_2) &\la \\
nmanpropose'(m_2,w_1) \vee naccept'(m_2,m_2) &\la \\
manpropose'(m_2,w_2) &\la \\
accept'(m_2,w_2) \vee accept'(m_2,m_2) \vee manpropose'(m_2,w_1) &\la \\
nwomanpropose'(m_1,w_1) \vee naccept'(m_2,w_1) &\la \\
nwomanpropose'(m_2,w_1) \vee naccept'(m_1,w_1) &\la \\
accept'(m_1,w_1) \vee womanpropose'(m_2,w_1) &\la \\
accept'(m_2,w_1) \vee womanpropose'(m_1,w_1) &\la \\
womanpropose'(m_1,w_2) &\la \\
nwomanpropose'(m_1,w_3) \vee naccept'(m_2,w_3) &\la \\
womanpropose'(m_2,w_3) &\la \\
accept'(m_2,w_3) \vee womanpropose'(m_1,w_3) &\la \\
nmanpropose'(m_2,w_3) &\la\\
nwomanpropose'(m_2,w_2) &\la 
\end{align*}
\begin{align*}
sat &\la manpropose'(X,Y), nmanpropose'(X,Y), man(X), woman(Y)\\
sat &\la womanpropose'(X,Y), nwomanpropose'(X,Y), man(X), woman(Y)\\
sat &\la accept'(X,Y), naccept'(X,Y), man(X), woman(Y)\\
sat &\la accept'(X,X), naccept'(X,X), man(X)\\
sat &\la accept'(X,X), naccept'(X,X), woman(X)\\
mancost'(1,1) &\la accept'(m_1,w_1)\\
mancost'(1,2) &\la accept'(m_1,w_2)\\
mancost'(1,2) &\la accept'(m_1,w_3)\\
mancost'(1,3) &\la accept'(m_1,m_1)\\
mancost'(2,2) &\la accept'(m_2,w_1)\\
mancost'(2,1) &\la accept'(m_2,w_2)\\
mancost'(2,2) &\la accept'(m_2,m_2)\\
womancost'(1,1) &\la accept'(m_1,w_1)\\
womancost'(1,1) &\la accept'(m_2,w_1)\\
womancost'(1,2) &\la accept'(w_1,w_1)\\
womancost'(2,1) &\la accept'(m_1,w_2)\\
womancost'(2,2) &\la accept'(w_2,w_2)\\
womancost'(3,2) &\la accept'(m_1,w_3)\\
womancost'(3,1) &\la accept'(m_2,w_3)\\
womancost'(3,3) &\la accept'(w_3,w_3)\\
manmax'(2,X) &\la mancost'(2,X)\\
manmax'(J,X) &\la manmax'(I,X), mancost'(J,Y), X>=Y, \#succ(J,I)\\
manmax'(J,X) &\la manmax'(I,X), mancost'(J,Y), X>=Y, \#succ(J,I)\\
manregret'(Z) &\la manmax'(1,Z)\\
womanmax'(2,X) &\la womancost'(2,X)\\
womanmax'(J,X) &\la womanmax'(I,X), womancost'(J,Y), X>=Y, \#succ(J,I)\\
womanmax'(J,X) &\la womanmax'(I,X), womancost'(J,Y), X>=Y, \#succ(J,I)\\
womanregret'(Z) &\la womanmax'(1,Z)\\
regret'(X) &\la manregret'(X), womanregret'(Y), X>Y\\
regret'(Y) &\la manregret'(X), womanregret'(Y), X<=Y\\
sat &\la regret(X), regret'(Y), X<=Y\\
&\la not\, sat\\
manargcost_1'(1..2) &\la \\
manargcost_2'(1..4) &\la  \\
womanargcost_1'(1..3) &\la \\
womanargcost_1'(1..3) &\la \\
mancost'(X,Y) &\la sat, manargcost_1'(X),manargcost_2'(Y)\\
womancost'(X,Y) &\la sat, womanargcost_1'(X),womanargcost_2'(Y)\\
manpropose'(X,Y) &\la sat, man(X), woman(Y)\\
nmanpropose'(X,Y) &\la sat, man(X), woman(Y)\\
womanpropose'(X,Y) &\la sat, man(X), woman(Y)\\
nwomanpropose'(X,Y) &\la sat, man(X), woman(Y)\\
accept'(X,Y) &\la sat, man(X), woman(Y)\\
accept'(X,X) &\la sat, man(X)\\
accept'(X,X) &\la sat, woman(X)\\
naccept'(X,Y) &\la sat, man(X), woman(Y)\\
naccept'(X,X) &\la sat, man(X)\\
naccept'(X,X) &\la sat, woman(X)
\end{align*}
Letting DLV compute the unique answer set of this disjunctive ASP program and filtering it to the literals $accept$ and $regret$, yields $\{accept(m_2,w_1)$, $accept(m_1,w_3)$, $accept(w_2,w_2)$, $regret(2)\}$, corresponding exactly to the minimum regret stable set of the SMP instance and the corresponding regret cost.
\end{example}

\begin{proposition} \label{pr:optssASP}
Let the criterion $crit$ be an element of $\{sexeq$, $weight$, $regret$, $singles\}$. For every answer set $I$ of the program $\PP_{crit}$ induced by an SMP instance with unacceptability and ties the set $S_I = \{(m,w) \,|\,$ $accept(m$, $w)$ $\in$ $I\}$ forms an optimal stable set of marriages w.r.t.\ criterion $crit$ and the optimal criterion value is given by the unique value $v_I$ for which $crit(v_I) \in I$. Conversely for every optimal stable set $S = \{(x_1,y_1), \hdots, (x_k,y_k)\}$ with optimal criterion value $v$ there exists an answer set $I$ of $\PP_{crit}$ such that $\{(x,y) \,|\,$ $accept(x,y)$ $\in I\} = \{(x_i,y_i) \,|\, i \in \{1,\hdots,k\}\}$ and $v$ is the unique value for which $crit(v) \in I$.
\end{proposition}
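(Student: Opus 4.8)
The plan is to prove the two directions separately, in each case reducing the behaviour on the \emph{unprimed} atoms to Propositions~\ref{pr:SMPASPindifasss} and~\ref{pr:SMPASPindifssas}, and then handling the primed part together with the atom $sat$ by a saturation argument. I would carry out the case $crit = sexeq$ in full detail; the remaining criteria differ only in the rules computing the criterion value, so they follow by the same argument once one checks that the corresponding cost rules are correct (in particular the successive rules (\ref{eq:weightsucc}), (\ref{eq:regretsucc}), (\ref{eq:singlessucc}), which replace the DLV aggregates).

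First I would isolate the unprimed fragment. No rule of the primed program $\PP^{\prime sexeq}_{ext}$ or of $\PP_{sat}$ has an unprimed atom in its head, and the rules of $\PP^{sexeq}_{ext}=\PP_{norm}\cup\{(\ref{eq:costm})\text{--}(\ref{eq:sexeq})\}$ mention only unprimed atoms; hence, by a splitting argument, the restriction of any answer set $I$ of $\PP_{sexeq}$ to unprimed atoms is an answer set of $\PP^{sexeq}_{ext}$. Since (\ref{eq:costm})--(\ref{eq:sexeq}) are naf-free and definite in the $accept$ atoms, this restriction consists of an answer set of $\PP_{norm}$ together with the deterministically forced cost atoms. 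By Proposition~\ref{pr:SMPASPindifasss} the set $S_I$ is a weakly stable matching, and inspecting (\ref{eq:costm})--(\ref{eq:sexeq}) (using Example~\ref{ex:cost} to see that $mancost(i,k)$ records exactly $c_{m_i}(S_I)$) shows $sexeq(v_I)\in I$ for the unique value $v_I=c_{sexeq}(S_I)$. Conversely, given any weakly stable $S$, Proposition~\ref{pr:SMPASPindifssas} supplies the unprimed atoms and the cost rules append the matching $sexeq$ value.

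Next I would treat $sat$ and the primed copy. Constraint (\ref{eq:sat}) forces $sat\in I$ for every answer set, and once $sat\in I$ the saturation rules (\ref{eq:cost})--(\ref{eq:satur}) and the facts force \emph{every} primed atom of $\PP_{sexeq}$ into $I$; thus every answer set is saturated on the primed atoms. The optimality content lies in deciding when such a saturated $I$ is a \emph{minimal} model of the reduct $\PP_{sexeq}^{I}$, which I would settle by analysing proper submodels $J\subsetneq I$. As the unprimed atoms and $sexeq(v_I)$ are forced, $J$ must drop a primed atom, whence $sat\notin J$ (otherwise saturation would restore it). By rules (\ref{eq:geenneg}) the absence of $sat$ means $J$ is \emph{consistent}: it never contains both $atom$ and $natom$. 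Arguing exactly as in the proof of Proposition~\ref{pr:SMPASPindifasss} and invoking Lemma~\ref{lem:disj}, a consistent model of the primed disjunctive rules assigns each man and each woman exactly one $accept'$, so its primed part encodes a genuine weakly stable matching $S'$; the successive rules (\ref{eq:weightsucc}) — designed precisely to compute correctly inside the reduct — then put $sexeq'(v')\in J$ with $v'=c_{sexeq}(S')$. Since $sat\notin J$, rule (\ref{eq:satcrit}) does not fire, so $v_I\le v'$ fails, i.e.\ $v'<v_I$. Hence a proper submodel exists \emph{iff} some stable matching is strictly better than $S_I$, i.e.\ iff $S_I$ is \emph{not} optimal.

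Combining these pieces yields both directions: if $I$ is an answer set it is saturated and minimal, so the submodel analysis shows no stable matching beats $S_I$, making $S_I$ optimal with value $v_I$; conversely, for an optimal stable $S$ of value $v$ I would take $I$ to be the union of the unprimed encoding from Proposition~\ref{pr:SMPASPindifssas}, the forced cost atoms, $sat$, and all primed atoms, check that $I$ is a model of $\PP_{sexeq}^{I}$, and conclude from optimality that it admits no proper submodel, hence is an answer set with $sexeq(v)\in I$. I expect the main obstacle to be this submodel analysis under saturation: one must verify carefully that the only way to avoid $sat$ is through a \emph{complete, consistent} primed assignment — ruling out partial or inconsistent primed interpretations — that such an assignment is forced by the disjunctive rules to be a bona fide stable matching via Lemma~\ref{lem:disj}, and that the unfolded successive rules (\ref{eq:weightsucc}), (\ref{eq:regretsucc}), (\ref{eq:singlessucc}) reproduce the aggregate values faithfully inside the reduct, which is exactly the reason the aggregates had to be replaced.
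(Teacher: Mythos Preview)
Your proposal is correct and follows essentially the same approach as the paper's proof: both isolate the unprimed fragment via a splitting-style argument (reducing to Propositions~\ref{pr:SMPASPindifasss} and~\ref{pr:SMPASPindifssas}), force $sat$ and full saturation of the primed atoms in every answer set, and then decide minimality by showing that a proper submodel of the reduct without $sat$ must carry a consistent primed assignment which, via Lemma~\ref{lem:disj} and the completion/tightness correspondence, encodes a strictly better stable matching. The only organisational difference is that you package the submodel analysis as a single biconditional (``proper submodel exists iff $S_I$ is not optimal'') and reuse it for both directions, whereas the paper argues the two directions separately --- constructing an explicit smaller model $J'$ from a better matching $S^\ast$ for the forward direction, and exhibiting the saturated interpretation $I=I_1\cup I_2$ with explicit argument ranges for the backward direction --- but the underlying mechanism is identical.
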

\begin{proof}
Let $(S_M,S_W)$ is an instance of the SMP with unacceptability and ties. \\
\noindent \fbox{Answer set $\Ra$ Optimal stable set} Let $I$ be an arbitrary answer set of $\PP_{crit}$ and let $S_I$ be as in the proposition. It is clear that the only rules in $\PP_{crit}$ that influence the literals of the form $manpropose(.,.)$, $womanpropose(.,.)$ and $accept(.,.)$ are the rules in $\PP_{norm}$. Hence any answer set $I$ of $\PP_{crit}$ should contain an answer set $I_{norm}$ of $\PP_{norm}$ as a subset. Proposition \ref{pr:SMPASPindifasss} implies that $I_{norm}$ corresponds to a stable set $S_I = \{(m,w) \,|\,$ $accept(m,w)$ $\in$ $I_{norm}\}$. Moreover, the only literals of the form $manpropose(.,.)$, $womanpropose(.,.)$ and $accept(.,.)$ in $I$ are those in $I_{norm}$, so $S_I = \{(m,w) \,|\,$ $accept(m,w)$ $\in$ $I\}$. If $crit=sexeq$, it is straightforward to see that the literals of the form $accept(.,.)$ in $I_{norm}$ uniquely determine which literals of the form $mancost(.,.)$, $womancost(.,.)$, $manweight(.)$, $womanweight(.)$ and $sexeq(.)$ should be in the answer set $I$. These literals do not occur in rules of $\PP_{crit}$ besides those in $\PP^{sexeq}_{ext}$. Notice that the rules which do contain these literals will imply that there will be just one literal of the form $sexeq(.)$ in $I$, namely $sexeq(v)$ with $v$ the sex-equalness cost of $S_I$. Analogous results can be derived for $crit \in \{weight, regret, singles\}$. It remains to be shown that $S_I$ is an optimal stable set. Suppose by contradiction that $S_I$ is not optimal, so there exists a stable set $S^{\ast}$ such that $v_I > v^\ast$, with $v^\ast$ the criterion value of $S^\ast$ to be minimized. We will prove that this implies that $I$ cannot be an answer set of $\PP_{crit}$, contradicting our initial assumption. \\
Proposition \ref{pr:SMPASPindifssas} and Lemma \ref{lem:disj} imply that there exists an interpretation $I^\ast_{disj}$ of the ASP program $\PP_{disj}$ induced by $(S_M,S_W)$ that corresponds to the stable set $S^\ast$. Moreover this interpretation is consistent, i.e.\ it will not contain $atom$ and $\neg atom$ for some atom $atom$. This implies that the interpretation $I'_{disj}$ defined as $I^\ast_{disj}$ in which $\neg atom$ is replaced by $natom$ for every atom $atom$ will falsify the body of the rules of the form (\ref{eq:geenneg}) of $\PP'^{crit}_{ext}$. An analogous reasoning as above yields that the literals of the form $accept'(.,.)$ in $I'_{disj}$ uniquely determine which literals of the form $mancost'(.,.)$, $womancost'(.,.)$, $mansum'(.,.)$, $womansum'(.,.)$, $manweight'(.)$, $womanweight'(.)$ and $sexeq'(.)$ should be in $I'_{disj}$. With those extra literals added to $I'_{disj}$, $I'_{disj}$ satisfies all the rules of $\PP'^{crit}_{ext}$. Moreover, $crit(v^\ast)$ is the unique literal of the form $crit(.)$ in $I'_{disj}$. Notice that $I'_{disj}$ does not contain the atom $sat$.\\
Define the interpretation $J=I_{norm} \cup I'_{disj}$. From the previous argument it follows that $J$ will satisfy every rule of $\PP^{crit}_{ext} \cup \PP'^{crit}_{ext}$ since the predicates occurring in both programs do not overlap. Moreover $J$ contains $crit(v_I)$ and $crit'(v^\ast)$ and these are the only literals of the form $crit(.)$ or $crit'(.)$. Since $v_I > v^\ast$ the rules of the form (\ref{eq:satcrit}) will be satisfied by $J$ since their body is always false. Call $J'$ the set $J \cup \{a \,|\, (a \la) \in \PP_{sat}\}$. Since $J'$ does not contain $sat$, the rules of $\PP_{sat}$ will all be satisfied by $J'$, with exception of the rule $\la not \, sat$. \\
The rule of the form (\ref{eq:sat}) implies that $I$ as answer set of $\PP_{crit}$ should contain $sat$. Now the set of rules (\ref{eq:cost}) -- (\ref{eq:satur}) imply that $I$ should also contain the literals $mancost'(.,.), womancost'(.,.)$ and $manpropose'(.,.)$, $womanpropose'(.,.)$, $accept'(.,.)$ with the corresponding literals prefixed by $n$ for every possible argument stated by the facts in $\PP_{sat}$. The successively computing rules (\ref{eq:weightsucc}) resp. (\ref{eq:regretsucc}) and (\ref{eq:singlessucc}) in $\PP'^{crit}_{ext}$, by which we replaced rules (\ref{eq:weightm}) -- (\ref{eq:weightw}) resp. (\ref{eq:regretm}) -- (\ref{eq:regretw}) and (\ref{eq:card}), garantuee that for every possible set of marriages and its corresponding criterium value $c$ $I$ will contain $crit(c)$ and all associated intermediate results. E.g.\ for $crit=sexeq$, the rules will garantuee that $I$ also contains $mansum'(.,.)$, $manweight(.)$, $womansum(.,.)$ and $womanweight(.)$ for every argument that could occur in a model of $\PP'^{crit}_{ext}$. \footnote[8]{Notice that this would not be the case if we use the original rules with $\#sum$, $\#max$ and $\#count$ in $\PP'^{crit}_{ext}$, since these rules would lead to only one value $c_M$ for which e.g.\ $manweight(c_M)$ should be in $I$, and similarly only one value $c_W$ for which $womanweight(c_W)$ should be in $I$. Consequently there would be only one value $c$ such that $crit(c)$ should be in $I$. This value would not necessarily correspond to $v^\ast$ and so we would not be able to conclude that $I'_{disj} \subseteq I$. Moreover DLV does not allow the use of these rules because of the cyclic dependency of literals they would create, involving the variables in the aggregate functions.}
This implies that $I'_{disj} \subseteq I$. We already reasoned in the beginning of the proof that $I_{norm} \subseteq I$ holds so it follows that $J \subseteq I$. Since the literals of $J' \setminus J$ are stated as facts of $\PP^{crit}_{ext}$, they should be in $I$, hence $J' \subseteq I$. Moreover $J' \subset I$ since $sat \in I \setminus {J'}$. \\
We use the notation $red(\PP,I)$ to denote the reduct of an ASP program $\PP$ w.r.t.\ an interpretation $I$. There is no rule in $\PP'^{crit}_{ext}$ with negation-as-failure in the body, hence $red(\PP'^{crit}_{ext},I) = red(\PP'^{crit}_{ext},J')$ = $\PP'^{crit}_{ext}$. We already reasoned that $J'$ satifies all the rules of the latter. We also reasoned that $I$ does not contain any other literals of the form $accept(.,.)$ than those who are also in $I_{norm}$, and by construction the same holds for $J'$. Hence $red(\PP^{crit}_{ext},I) = red(\PP^{crit}_{ext},J')$ and by construction $J'$ satisfies all the rules of this reduct. It is clear that $red(\PP_{sat},I)$ is $\PP_{sat}$ without the rule $\la not \, sat$, since $sat \in I$. Again we already argued that $J'$ satisfies $red(\PP_{sat},I)$. Hence $J'$ satisfies all the rules of $red(\PP_{crit},I)$, implying that $I$, which strictly contains $J'$, cannot be an answer set of $\PP_{crit}$ since it is not a minimal model of the negation-free ASP program $red(\PP_{crit},I)$~\cite{ASPGel88}. \\
\noindent \fbox{Optimal stable set $\Ra$ Answer set} Let $(S_M,S_W)$ be an instance of the SMP with unacceptability and ties and let $S = \{(x_1,y_1), \hdots, (x_k,y_k)\}$ be an optimal stable set with optimal criterion value $v$. To see that the second part of the proposition holds it suffices to verify that the following interpretation $I$ is an answer set of $\PP_{crit}$, with the notation $P_{x_i}(y)$ as the index $a$ for which $y \in \sigma^l_M(a)$ if $x_i=m_l$ and symmetrically $P_{y_i}(x)$ as the index $a$ for which $x \in \sigma^{l'}_W(a)$ if $y_i=w_{l'}$. If $x_i=y_i$ we set $P_{x_i}(y_i)=P_{y_i}(x_i)=|\sigma^i_M|$ if $x_i$ is a man and $|\sigma^i_W|$ otherwise. So let $I$ be given by:
\allowdisplaybreaks
\begin{align*}
I = I_1 \cup I_2
\end{align*}
with 
\begin{align*}
I_1 = & \{accept (x_i,y_i) \,|\, i \in \{1,\hdots,k\}\} \{crit(v)\} \cup \{sat\} \\
\cup& \{womanpropose(x_i,y_i) \,|\, x_i \neq y_i\} \{manpropose(x_i,y_i)|x_i \neq y_i\} \\
\cup& \{manpropose(x_i,y) \,|\, i \in \{1,\hdots,k\}, x_i=m_l, \exists a < P_{x_i}(y_i) \dpt y \in \sigma^l_M(a)\} \\
\cup& \{womanpropose(x,y_i)|i \in \{1,\hdots,k\}, y_i=w_{l'} ,\exists a < P_{y_i}(x_i) \dpt x \in \sigma^{l'}_W(a)\} \\
\cup& \{mancost(l,P_{x_i}(y_i)) \,|\, crit \neq singles, i \in \{1,\hdots,k\}, x_i = m_l\} \\ 
\cup& \{womancost(P_{y_i}(x_i),l') \,|\, crit \neq singles, i \in \{1,\hdots,k\}, y_i = w_{l'}\} \\
\cup& \{manweight(c_M(S)) \,|\, crit \in \{sexeq,weight\}\} \\  \cup& \{womanweight(c_W(S)) \,|\, crit \in \{sexeq,weight\}\} \\
\cup& \{manregret(c_{regret,M}(S)) \,|\, crit = regret\} \\
\cup& \{womanregret(c_{regret,W}(S)) \,|\, crit = regret\}
\end{align*}
and
\begin{align}
I_2 = &\{manargcost'_1(z) \,|\, z \in \{1,\hdots,n\}\}\nonumber\\  \cup & \{manargcost'_2(z) \,|\, z \in \{1,\hdots,p+1\}\} \nonumber\\
\cup & \{womanargcost'_1(z)  \,|\, z \in \{1,\hdots,p\}\} \nonumber \\ 
\cup & \{womanargcost'_2(z)  \,|\, z \in \{1,\hdots,n+1\}\} \nonumber\\
\cup & \{man(x) \,|\, x \in M\} \cup \{woman(x) \,|\, x \in W\} \label{eq:facts}\\
\cup &\{mancost'(i,j) \,|\, crit \neq singles, i \in \{1,\hdots,n\}, j \in \{1,\hdots,p+1\}\} \nonumber\\
\cup& \{womancost'(j,i) \,|\, crit \neq singles, i \in \{1,\hdots,n+1\}, j \in \{1,\hdots,p\}\} \label{eq:mwcost}\\
\cup& \{manpropose'(x,y) \,|\, x \in M, y \in W\} \cup \{womanpropose'(x,y) \,|\,x \in M, y \in W\}  \nonumber\\
\cup & \{accept'(x,y) \,|\, x \in M, y \in W\} \cup \{accept'(x,x) \,|\,x \in M \cup W\}  \nonumber\\
\cup &\{nmanpropose'(x,y) \,|\, x \in M, y \in W\} \nonumber\\
\cup &\{nwomanpropose'(x,y) \,|\,x \in M, y \in W\} \nonumber\\
\cup &\{naccept'(x,y) \,|\, x \in M, y \in W\} \cup \{naccept'(x,x) \,|\,x \in M \cup W\} \label{eq:mpwpacc}\\
\cup& \{crit'(val) \,|\, val \in \arg(crit)\} \nonumber\\
\cup & \{single'(i,j)\,|\, crit=singles, i\in \{1,\hdots,n+p\}, j \in \{0,1\}\}\nonumber\\
\cup & \{singlesum'(i,j)\,|\, crit=singles, i\in \{1,\hdots,n+p\}, j \in \{1,\hdots,(n+p-i+1)\}\}\nonumber\\
\cup& \{mansum'(i,j) \,|\, crit \in \{sexeq,weight\}, i \in \{1,\hdots,n\}, j \in \{n-i+1,\hdots,(n-i+1)(p+1)\}\} \nonumber\\
\cup& \{womansum'(j,i) \,|\, crit \in \{sexeq,weight\}, j \in \{1,\hdots,p\}, i \in \{p-j+1,\hdots,(p-i+1)(n+1)\}\} \nonumber\\
\cup& \{manweight'(z) \,|\, crit \in \{sexeq,weight\}, z \in \{n,\hdots,n(p+1)\}\} \nonumber\\
\cup& \{womanweight'(z) \,|\, crit \in \{sexeq,weight\}, z \in \{p,\hdots,p(n+1)\}\} \nonumber\\
\cup& \{manmax'(i,j) \,|\, crit = regret, i \in \{1,\hdots,n\}, j \in \{1,\hdots,p+1\}\} \nonumber\\
\cup& \{womanmax'(j,i) \,|\, crit = regret, j \in \{1,\hdots,p\}, i \in \{1,\hdots,n+1\} \nonumber\\
\cup& \{manregret'(z) \,|\, crit=regret, z \in \{1,\hdots,p+1\}\} \nonumber\\
\cup& \{womanregret'(z) \,|\, crit=regret, z \in \{1,\hdots,n+1\}\} \label{eq:rest}
\end{align}  
The notation $\arg(c)$ stands for the possible values the criterion can take within this problem instance:
\begin{itemize}
\item $crit=sexeq \Ra \arg(crit)=\{0,\hdots, \max(np+n-p,np+p-n)\}$,
\item $crit=weight \Ra \arg(crit)=\{n+p,\hdots,2np+p+n\}$,
\item $crit=regret \Ra \arg(crit)=\{1,\hdots,\max(p,n)+1)\}$,
\item $crit=singles \Ra \arg(crit)=\{0,\hdots,n+p\}$.
\end{itemize}
To verify wether this interpretation is an answer set of $\PP_{crit}$, we should compute the reduct w.r.t.\ $I$ and check wether $I$ is a minimal model of the reduct~\cite{ASPGel88}. It can readily be checked that $I$ satisfies all the rules of $red(\PP_{crit},I)$. It remains te be shown that there is no strict subset of $I$ with satisfies all the rules. First of all all the facts of $\PP_{crit}$ must be in the minimal model of the reduct, explaining why the sets of literals (\ref{eq:facts}) should be in $I$. The only rules with negation-as-failure are part of $\PP^{crit}_{ext}$.\\
As in the previous part of the proof, it is straightforward to see that $I_1$ is the unique minimal model of the reduct of $\PP^{crit}_{ext}$ w.r.t.\ $I$, considering that the literals in $I_2$ don't occur in $\PP^{crit}_{ext}$. So any minimal model of $red(\PP_{crit},I)$ must contain $I_1$.\\
The key rule which makes sure that $I$ is a minimal model of the reduct is (\ref{eq:satcrit}). The rules (\ref{eq:geenneg}) imply that for each model of $red(\PP_{crit},I)$ that does not contain $sat$, the literals of $\PP'^{crit}_{ext}$ in that model will correspond to a stable set of the SMP instance. In that case rule (\ref{eq:satcrit}) will have a true body, since $S$ is optimal, implying that $sat$ should have been in the model. And the presence of $sat$ in any minimal model implies the presence of the set of literals (\ref{eq:rest}) in any minimal model of the reduct. This can be seen with the following reasoning. Due to the presence of the facts (\ref{eq:facts}) and $sat$ in any minimal model of the reduct, rules (\ref{eq:cost}) imply the presence of the literals (\ref{eq:mwcost}) in any minimal model. For the same reason rules (\ref{eq:satur}) imply that the literals (\ref{eq:mpwpacc}) should be in any minimal model of $red(\PP_{crit},I)$. For $crit=sexeq$ the presence of the literals of the form (\ref{eq:mwcost}) in any minimal model of the reduct together with rules (\ref{eq:weightsucc}) imply that $mansum'(i,j)$ should be in any minimal model for every $i \in \{1,\hdots,n\}$ and $j \in \{n-i+1,\hdots,(n-i+1)(p+1)\}$: for $i=n$ the first rule of (\ref{eq:weightsucc}) implies that $mansum'(n,x)$ is in any minimal model for every $x$ such that $manargcost_2'(x)$ is in it, i.e.\ any $x \in \{1,\hdots,p+1\}$. Now the second rule of (\ref{eq:weightsucc}) implies that $mansum'(n-1,x)$ is in any minimal model for every $x+y$ such that $manargcost_2'(x)$ and $mansum'(n,y)$ are in it, i.e.\ any $x+y \in \{2,\hdots,2(p+1)\}$. If we continue like this, it is straightforward that every literal of the form $mansum'(.,.)$ of $I_2$ should be in any minimal model. The third rule of (\ref{eq:weightsucc}) now implies that $manweight'(x)$ should be in any minimal model for every $x$ such that $mansum'(1,x)$ is in it, i.e.\ $x \in \{n,\hdots,n(p+1)\}$. The same reasoning can be repeated for the literals $womansum'$ and $womanweight'$. At this point rules (\ref{eq:sexeq}) imply that $sexeq'(|x-y|)$ should be in any minimal model which contains $manweight'(x)$ and $womanweight'(y)$. Notice that only one of the two rules in (\ref{eq:sexeq}) will apply for every $x$ and $y$ since the numerical variables in DLV are positive. Considering the arguments for which $manweight'$ and $womanweight'$ should be in any minimal model, it follows that $sexeq'(x)$ should be in any minimal model for every $x \in \{0,\hdots,\max(p(n+1)-n,n(p+1)-p)\}$, which is exactly $\arg(crit)$. For the other criteria, an analogous reasoning shows that the presence of all literals of $I_2$ is required in any minimal model of the reduct.\\
Considering the fact that we have proved that all literals of $I$ should be in any minimal model of the reduct and $I$ fulfils all the rules of the reduct, we know that $I$ is a minimal model of the reduct and thus an answer set of $\PP_{crit}$.

\end{proof}

If we delete from $\PP_{sexeq}$ the rules (\ref{eq:weightw}) -- (\ref{eq:sexeq}) and replace rule (\ref{eq:satcrit}) by the rule $sat \la manweight(X)$, $manweight'(Y), X \leq Y$, then we obtain the M-optimal stable sets. Analogously we can obtain the W-optimal stable sets.

If a criterion is to be maximized, the symbol $\leq$ in rule (\ref{eq:satcrit}) is simply replaced by $\geq$. E.g.\ for $crit=singles$ we will get minimum cardinality stable sets.

\section{Complexity and Future Work}
The NP-complete decision problem `does there exist a stable set with cardinality $\geq k$ (resp.\ $\leq k$) for an SMP instance with unacceptability and ties with $k$ a positive integer?'~\cite{SMPMan99,SMPMan02} has practical importance, e.g.\ in the National Resident Matching Program~\cite{SMPMan02}. If we add a rule \textit{sat} $\la singles(X), X \leq (n+p-2k)$ to the extended induced program $\PP^{singles}_{ext}$ defined in Subsect.\ \ref{sec:defOSSASP}, then this problem can be formulated as `does there exist an answer set of the normal ASP program $\PP^{singles}_{ext}$ which contains the literal \textit{sat}?' (i.e.\ brave reasoning), another NP-complete problem~\cite{ASPBa03}. So our model forms a suitable framework for these kind of decision problems concerning optimality of stable sets in the SMP.

Notice that the complexity of this kind of decision problem and the one mentioned in the last paragraph of Subsect.\ \ref{sect:SMPinASP} are a good indication how hard it is to find an (optimal) stable set, as opposed to the problems `does there exist an (optimal) stable set?', which tell us how hard it is to know whether there exists a solution but not necessarily how hard it is to find one.

Combining these problems leads to a new decision problem: `is the pair $(m,w)$ \textit{optimally stable} for an instance of the SMP with unacceptability and ties?'. We define an \textit{optimally stable pair} as a pair $(m,w)$ for which there exists an optimal stable set in which $m$ and $w$ are matched. As far as we know this problem has not been studied yet, although it could be useful in practice, for instance if one wants to find a maximum cardinality matching but also wants to prioritize some couple or a person. Optimality is still desirable, because it ensures the others from not being put too much at a disadvantage. For instance in the kidney exchange problem, in which kidney patients with a willing but incompatible donor try to interchange each other's donors to get a transplant, this is a realistic situation: if two patients with intercompatible donors urgently need a transplant, they should get priority, but of course we still want to match as many patients to donors as possible. Considering the complexity of the separate decision problems, the combined problem might have a higher complexity, perhaps corresponding to the $\Sigma^P_2$-complexity of our grounded disjunctive normal ASP program with aggregate functions~\cite{ASPBa03,ASPDell03}. It should be noticed however that the addition of constraints not necessarily increases complexity and a precise classification of complexity is desirable.

\section{Conclusion}
We formalized and solved different variants of the SMP using ASP programs, which can easily be adapted to yet other variants. Moreover we applied saturation to compute optimal stable sets, with the advantage that these programs can be handled with the efficient off-the-shelf ASP solver DLV. To the best of our knowledge, our encoding offers the first exact implementation of finding sex-equal, egalitarian, minimum regret, or maximum cardinality stable sets for an instance of the SMP with unacceptability and ties. Hence, our general framework allows us to tackle a class of problems and requires only small adaptions to easily shift between them.

\bibliographystyle{plain}
\bibliography{biblio}

\end{document}